\newcommand{\rw}[1]{\begingroup#1\endgroup}
\def\BibTeX{{\rm B\kern-.05em{\sc i\kern-.025em b}\kern-.08em
		T\kern-.1667em\lower.7ex\hbox{E}\kern-.125emX}}
\newcommand{\indep}{\rotatebox[origin=c]{90}{$\models$}}
\newcommand{\mat}[1]{\boldsymbol{\mathrm{#1}}}
\newcommand{\vecb}[1]{\boldsymbol{#1}}
\DeclareMathOperator{\tr}{tr}
\DeclareMathOperator{\pr}{pr}
\DeclareMathOperator{\pa}{pa}
\DeclareMathOperator{\var}{var}
\DeclareMathOperator{\mgfrob}{grad\_frob}
\DeclareMathOperator{\mglik}{grad\_lik}
\DeclareMathOperator{\mband}{band}
\DeclareMathOperator{\mlasso}{lasso}
\DeclareMathOperator{\fone}{F1}
\DeclareMathOperator{\tdr}{TDR}
\DeclareMathOperator{\tpr}{TPR}
\DeclareMathOperator{\tnr}{TNR}
\DeclareMathOperator{\acc}{ACC}
\DeclareMathOperator{\norm}{NORM}
\newtheorem*{proposition*}{Proposition}
\newtheorem{proposition}{Proposition}
\newtheorem{remark}{Remark}
\begin{document}
	\author{Irene Córdoba, Concha Bielza, Pedro Larrañaga and Gherardo Varando}

\title{Sparse Cholesky Covariance Parametrization for Recovering Latent Structure in Ordered Data}

\date{}

\maketitle

\begin{abstract}
The sparse Cholesky parametrization of the inverse covariance matrix is directly related to Gaussian Bayesian networks. Its counterpart, the covariance Cholesky factorization model, has a natural interpretation as a hidden variable model for ordered signal data. Despite this, it has received little attention so far, with few notable exceptions. To fill this gap, in this paper we \rw{focus on} arbitrary zero pattern\rw{s} in the Cholesky factor of a covariance matrix. \rw{We discuss how these models can also be extended, in analogy with Gaussian Bayesian networks, to data where no apparent order is available. For the ordered scenario, we propose a novel estimation method that is based on matrix loss penalization, as opposed to the existing regression-based approaches}. The performance of this \rw{sparse model for the Cholesky factor, together with our novel estimator,} is assessed in a simulation setting \rw{, as well as over} spatial and temporal real data where a natural ordering \rw{a}rises among the variables. We give guidelines, based on the empirical results, about which of the methods analysed is more appropriate for each setting.
\end{abstract}

\textbf{Keywords} \rw{Covariance matrix}, \rw{sparse matrices}, \rw{regression analysis}, graphical model, \rw{Gaussian distribution}

\section{Introduction}
The multivariate Gaussian distribution is central in both statistics and
machine learning because of its wide applicability and well theoretical
behaviour. Whenever it is necessary to reduce model dimension, such as in a
high dimensional setting \cite{buehlmann2011} or in graphical models
\cite{maathuis2018,li2020}, sparsity is imposed in either the covariance matrix
$\mat{\Sigma}$ or its inverse $\mat{\Omega} = \mat{\Sigma}^{-1}$. 

The inverse covariance matrix
$\mat{\Omega}$ directly encodes information about partial correlations; therefore, when a zero pattern is present in $\mat{\Omega}$, it represents \rw{absent edges in} the undirected graph \rw{of} a
Gaussian Markov network \cite{dempster1972,yuan2007}. Furthermore, letting $\mat{\Omega} =
\mat{W}\mat{W}^t$ be its Cholesky decomposition, a zero pattern in the lower
triangular matrix $\mat{W}$ yields the acyclic digraph associated with a
Gaussian Bayesian network \cite{wright1934,wermuth1980} model, up to a permutation of the variables \cite{vandegeer2013}. As a result, much of the
academic focus has been on sparsity in either the inverse covariance matrix or its Cholesky decomposition
\cite{pourahmadi1999,daspermont2008,friedman2008,rothman2008,cordoba2020b}.

Conversely, a zero pattern in the covariance matrix $\mat{\Sigma}$ \rw{represents missing edges from the undirected graph of a} covariance graph \rw{model}
\cite{kauermann1996,cox1993,chaudhuri2007,bien2011}. However, a
structured zero pattern on 
the Cholesky decomposition $\mat{\Sigma} = \mat{T}\mat{T}^t$ of the covariance matrix
has only been addressed by
few works\rw{:} \cite{wermuth2006} \rw{and} \cite{rothman2010}. In \cite{wermuth2006} the authors
briefly analyse zeros in $\mat{T}$ mainly as a tool for better understanding of
a higher-level graphical model called covariance chain, which is the main
focus of their work. More directly related \rw{to} our interests is the work of
Rothman et al. \cite{rothman2010}, who directly explore a regression
interpretation of the Cholesky factor $\mat{T}$ over the \emph{error}
variables. 
However, the authors do not address a
structured zero pattern 
directly; instead, they focus on a banding
structure for $\mat{T}$, inspired by the popularity of this estimator for the
covariance matrix. 
In fact, a significant amount of the paper is
devoted to analysing the relationship between 
the covariance matrix, or its inverse, and banded Cholesky factorization.

To fill this gap, in this paper we focus on arbitrary zero patterns in the
Cholesky factor $\mat{T}$ of the covariance matrix $\mat{\Sigma}$. We argue how this naturally models scenarios with ordered variables representing noisy inputs from hidden signal sources, see for example \cite{das2019} where latent brain regions are assumed to influence neural measurements. We discuss how this model could be extended to unordered variables by defining a new Gaussian graphical model over the Cholesky factorization of the covariance matrix. This new graphical model can be thought of as the analogue of a Gaussian Bayesian network, which is defined via sparse factorizations of the \emph{inverse} covariance matrix. For ordered variables, we propose a novel learning method by \rw{directly penalising a matrix loss, in contrast with existing regression-based approaches in the literature \cite{rothman2010}. This new estimator} is feasible \rw{to compute} thanks to a simplification of the \rw{loss} gradient computation similar to \rw{the one proposed in} \cite{varando2020}. Finally, we empirically assess the model performance in a
broad range of experiments, exploring multiple simulation scenarios as
well as real data where a natural spatial or temporal order arises between the
variables.

The rest of the paper is organized as follows. In Section \ref{sec:prel} we
introduce the theoretical preliminaries necessary to follow the exposition.
Then we detail the proposed sparse model for the Cholesky factor $\mat{T}$
of the covariance matrix $\mat{\Sigma}$\rw{, as well as introduce its extension to a Gaussian graphical model,} in Section \ref{sec:graph}. Afterwards,
we discuss \rw{existing state-of-the-art estimation} methods in Section
\ref{sec:learn}, where we also detail \rw{our novel penalized matrix loss proposal}. We
assess \rw{the previously explained estimation methods} in both simulated and real
experiments\rw{,} whose results are shown in Section
\ref{sec:exp}. Finally, we close the paper in Section \ref{sec:conc},
discussing the conclusions that can be \rw{drawn} from the presented work, and
also outlining the planned lines of future research. \rw{Appendices \ref{app:gbn:rel}, \ref{app:gradient} and \ref{app:fig} are referenced throughout the paper and contain additional material for the interested reader.}

\section{Theoretical preliminaries}\label{sec:prel}

We will denote as $\mathcal{N}(\vecb{\mu}, \mat{\Sigma})$ the multivariate
Gaussian distribution with mean vector $\vecb{\mu}$ and covariance matrix
$\mat{\Sigma}$. Since we focus on the zero pattern in $\mat{\Sigma}$, in the
following we will set $\vecb{\mu} = \vecb{0}$ for notational simplicity,
without loosing generality. As in the previous section, we will denote the
inverse covariance matrix as $\mat{\Omega} = \mat{\Sigma}^{-1}$.

\subsection{The inverse covariance matrix and systems of recursive regressions}

As we stated previously, the inverse covariance matrix and its decompositions
have been thoroughly researched. If $\vecb{X}$ is a $p$-variate Gaussian random
vector, that is, if $\vecb{X} \sim \mathcal{N}(\vecb{0}, \mat{\Sigma})$, then the
\emph{upper} Cholesky factorization of $\mat{\Omega}$ can be used to model
ordered sequences of regressions \cite{wermuth1980,pourahmadi1999}, as follows.

For  $J \subseteq \{1, \ldots, p\}$, $i \not\in J$ and $j \in J$, \rw{let} $\vecb{\beta}_{i |J} = (\rw{\mat{\Sigma}}_{iJ}\mat{\Sigma}^{-1}_{JJ})^{\rw{t}}$ \rw{denote} the vector of regression coefficients of $X_i$ on
$\vecb{X}_J$, with its $j$-th entry being $\beta_{ij | J}$, the coefficient corresponding to variable $X_j$. We may equivalently write the statistical model for $\vecb{X}$ as
a system of recursive linear regression equations 
(also called a linear structural equation model \cite{drton2018}),
where for each $i \in \{1,
\ldots, p\}$, \begin{equation}\label{eq:receq} X_{i} = \sum_{j = 1}^{i - 1}
\beta_{ij | 1, \ldots, i - 1}X_{j} + \mathcal{E}_{i}, \end{equation} with
$\mathcal{E}_i$ independent Gaussian random variables of zero mean \rw{and $\var(\mathcal{E}_i) = \var(X_i | X_1, \ldots, X_{i - 1})$}. The above
regression system can also be expressed as $\vecb{X} =
\mat{B}\vecb{X} + \vecb{\mathcal{E}}$, \rw{where} $\mat{B}$ \rw{is a} strictly lower triangular \rw{matrix}
with entries $b_{ij} = \beta_{ij | 1, \ldots, i - 1}$.  Rearranging the
previous equation, we obtain \begin{equation}\label{eq:latent} \vecb{X} =
\mat{L}\vecb{\mathcal{E}}, \end{equation} where $\mat{L} = (\mat{I}_p -
\mat{B})^{-1}$ with $\mat{I}_p$ the $p$-dimensional identity matrix.  Now if we
take variances and inverses on Equation \eqref{eq:latent}, we arrive at the
upper Cholesky decomposition of the inverse covariance matrix,
\begin{equation}\label{eq:chol:inv} \mat{\Omega} = \mat{\Sigma}^{-1} =
	\mat{L}^{-t} \mat{D}^{-1} \mat{L}^{-1} = \mat{U} \mat{D}^{-1} \mat{U}^t
	= \mat{W}\mat{W}^t, \end{equation} 
where $\mat{U} = \mat{L}^{-t} =	(\mat{I}_p - \mat{B})^t$ and $\mat{W} = \mat{U}\sqrt{\mat{D}^{-1}}$ are	upper triangular \rw{matrices, and $\mat{D}$ is a diagonal matrix containing the variances of $\vecb{\mathcal{E}}$}.

Equations \eqref{eq:receq} and \eqref{eq:chol:inv} are intimately related: all
parameters of the recursive regression system are encoded in the upper Cholesky
factorization. Indeed, we have that
$\mat{D} = \var(\vecb{\mathcal{E}})$ \rw{and} the $(j, i)$ entry of matrix
$\mat{U}$, $u_{ji}$\rw{,} is equal to $-\beta_{ij | 1, \ldots, i - 1}$, therefore all
the regression coefficients can also be recovered, and $\mat{U}$ is explicitly written as
\begin{equation}\label{eq:u}
\mat{U} = \begin{pmatrix}
1 & -\beta_{21 | 1} & \cdots & -\beta_{p1 | 1, \ldots, p - 1}\\
0 & 1 & \ddots & \vdots\\
\vdots & \ddots & \ddots & -\beta_{p\rw{p - 1} | 1, \ldots, p - 1}\\
0 & \cdots & 0 & 1\\
\end{pmatrix}.
\end{equation}

\subsection{The Cholesky decomposition of a covariance matrix}
From Equation \eqref{eq:latent} we could have proceeded differently. Instead of taking variances and then inverses, most suited for analysing ordered sequences of variable regressions and subsequent Gaussian Bayesian networks (GBNs), we could have just taken variances and would have arrived at the Cholesky decomposition of the covariance matrix \cite{rothman2010}
\begin{equation}\label{eq:chol:cov}
\mat{\Sigma} = \mat{L}\mat{D}\mat{L}^t = \mat{T}\mat{T}^t,
\end{equation}
where now $\mat{L} = (\mat{I}_p - \mat{B})^{-1}$ and $\mat{T} = \mat{L}\sqrt{\mat{D}}$ are lower triangular. 

Observe that Equation \eqref{eq:chol:cov} is a direct analogue of Equation \eqref{eq:chol:inv}; \rw{although} now there is not a natural interpretation of $\mat{L}$ entries in terms of regression coefficients. However, we will now show that the entries of $\mat{L}$ are in fact regression coefficients, just over a different conditioning set. Alternative derivations of this result can be found in \cite[p. 158]{dempster1969} and \cite[p. 846]{wermuth2006}. \rw{The one in \cite{dempster1969} is computational, based on the sweep matrix operator \cite{beaton1964}, whereas \cite{wermuth2006} provides a sketch based on a recursive expression for regression coefficients. We use instead simple identities over partitioned matrices.}
\begin{proposition}\label{prop:invreg}
	For $i \in \{1, \ldots, p\}$ and $j < i$, the
	$(i, j)$ entry of matrix $\mat{L}$, denoted as $l_{ij}$, is equal to
	${\beta}_{ij | 1, \ldots, j}$.
\end{proposition}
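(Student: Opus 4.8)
The plan is to compute the regression coefficient directly from the factorization $\mat{\Sigma} = \mat{L}\mat{D}\mat{L}^t$ by partitioning everything with respect to $J = \{1, \ldots, j\}$. Recall the definition of the regression coefficient vector: $\vecb{\beta}_{i | J} = (\mat{\Sigma}_{iJ}\mat{\Sigma}^{-1}_{JJ})^{t}$, so the quantity $\beta_{ij | 1, \ldots, j}$ is precisely the last (the $j$-th) entry of the row vector $\mat{\Sigma}_{iJ}\mat{\Sigma}^{-1}_{JJ}$. The goal is therefore to show that this last entry equals $l_{ij}$.

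First I would exploit the lower-triangular structure of $\mat{L}$. Writing $\mat{L}_{iJ} = (l_{i1}, \ldots, l_{ij})$ and letting $\mat{L}_{JJ}$, $\mat{D}_{JJ}$ denote the leading $j \times j$ principal blocks, the key observation is that in $\mat{\Sigma} = \mat{L}\mat{D}\mat{L}^t$ the contributions of indices outside $J$ drop out, because the corresponding entries of $\mat{L}$ (equivalently, of $\mat{L}^t$) lie strictly above the diagonal and hence vanish. This gives the two identities $\mat{\Sigma}_{JJ} = \mat{L}_{JJ}\mat{D}_{JJ}\mat{L}_{JJ}^t$ and $\mat{\Sigma}_{iJ} = \mat{L}_{iJ}\mat{D}_{JJ}\mat{L}_{JJ}^t$, which are the only two block products I actually need.

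Substituting these into the coefficient formula and using that $\mat{L}_{JJ}$ and $\mat{D}_{JJ}$ are invertible, the diagonal factor $\mat{D}_{JJ}$ and the factor $\mat{L}_{JJ}^t$ cancel cleanly:
\[
\mat{\Sigma}_{iJ}\mat{\Sigma}^{-1}_{JJ} = \mat{L}_{iJ}\mat{D}_{JJ}\mat{L}_{JJ}^t \left(\mat{L}_{JJ}\mat{D}_{JJ}\mat{L}_{JJ}^t\right)^{-1} = \mat{L}_{iJ}\mat{L}_{JJ}^{-1}.
\]
It then remains only to read off the last entry of this row vector. Here I would use that $j$ is the largest index in $J = \{1, \ldots, j\}$: since $\mat{L}_{JJ}$ is lower triangular with unit diagonal, so is its inverse $\mat{L}_{JJ}^{-1}$, and therefore its $j$-th column is the standard basis vector $\vecb{e}_j$ (the entries above the diagonal vanish, and there is no index below $j$ remaining inside $J$). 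Multiplying $\mat{L}_{iJ}$ by this column extracts exactly $(\mat{L}_{iJ})_j = l_{ij}$, which yields $\beta_{ij | 1, \ldots, j} = l_{ij}$ as claimed.

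I expect the only point requiring genuine care to be the bookkeeping in the second step, namely verifying that the off-block terms really do disappear from $\mat{\Sigma}_{JJ}$ and $\mat{\Sigma}_{iJ}$; this is where the triangularity of $\mat{L}$ is used and is the crux of reducing a global factorization to a purely local leading-block computation. Once those two identities are established, the cancellation and the trivial $j$-th column of $\mat{L}_{JJ}^{-1}$ finish the argument immediately, with no heavier machinery than manipulation of partitioned matrices.
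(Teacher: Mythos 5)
Your proposal is correct and follows essentially the same route as the paper's proof: establish the partitioned identities $\mat{\Sigma}_{JJ} = \mat{L}_{JJ}\mat{D}_{JJ}\mat{L}_{JJ}^t$ and $\mat{\Sigma}_{iJ} = \mat{L}_{iJ}\mat{D}_{JJ}\mat{L}_{JJ}^t$, cancel to get $\vecb{\beta}_{i|J}^t = \mat{L}_{iJ}\mat{L}_{JJ}^{-1}$, and read off the $j$-th entry using that the last column of $\mat{L}_{JJ}^{-1}$ is the standard basis vector. The only difference is that you sketch why the off-block terms vanish (via triangularity of $\mat{L}$), whereas the paper simply cites these identities from Dempster; your added justification is sound.
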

\begin{proof}
	For each $i \in \{1, \ldots, p\}$, 
	$j < i$ 
	and $J = \{1, \ldots, j\}$, the following
	partitioned identities hold \cite[Equation (4.2.18)]{dempster1969}
	\begin{align*}
	&\rw{\mat{\Sigma}}_{iJ} = \rw{\mat{L}}_{iJ}\mat{D}_{JJ} \mat{L}_{JJ}^t,\\
	&\mat{\Sigma}_{JJ} = \mat{L}_{JJ}\mat{D}_{JJ} \mat{L}_{JJ}^t.
	\end{align*}
	Therefore,
	\[
	\begin{aligned}
	\vecb{\beta}_{i|J}^{\rw{t}} &= \rw{\mat{\Sigma}}_{iJ} \mat{\Sigma}_{JJ}^{-1}\\ &= \rw{\mat{L}}_{iJ}\mat{D}_{JJ}
	\mat{L}_{JJ}^t(\mat{L}_{JJ}
	\mat{D}_{JJ} \mat{L}_{JJ}^t)^{-1}\\
	&= \rw{\mat{L}}_{iJ}\mat{L}_{JJ}^{-1}.
	\end{aligned}
	\]
	
Furthermore, observe that, since $\mat{L}_{JJ}$ is lower triangular with ones
along the diagonal, the last column of $\mat{L}_{JJ}^{-1}$ is always a vector
of zero entries except the last \rw{entry,} which is $1$. This means, in particular,
that for each $i \in \{1, \ldots, p\}$, $j < i$ and $J = \{1, \ldots,
j\}$, the $j$-th element of \rw{row} vector $\rw{\mat{L}}_{iJ}\mat{L}_{JJ}^{-1}$ is equal to
$l_{ij}$, which in turn is equal to the $j$-th entry of
$\vecb{\beta}_{i|J}$, $\beta_{ij|J}$.
\end{proof}

The explicit expression in terms of regression coefficients for $\mat{L}$ 
is therefore,
\begin{equation}\label{eq:l}
\mat{L} = \begin{pmatrix}
1 & 0 & \cdots & 0\\
{\beta}_{21 | 1} & 1 & \ddots & \vdots\\
\vdots & \ddots & \ddots & 0\\
{\beta}_{p 1 | 1} & \cdots &  {\beta}_{p p - 1 | 1, \ldots, p - 1} & 1\\
\end{pmatrix}.
\end{equation}
From Equations \eqref{eq:u} and \eqref{eq:l} 
useful relationships can be determined; see, for example, \cite[p.847]{wermuth2006} 
and Appendix \ref{app:gbn:rel}.

\subsection{The Gaussian Bayesian network model and sparse Cholesky factorizations of $\mat{\Omega}$}
The upper Cholesky factorization in Equation \eqref{eq:chol:inv} is typically
used as a parametrization for \rw{GBNs}, see, for
example, \cite{cordoba2020b,vandegeer2013,ye2020}. In the following we will explain
why, since this is closely related \rw{to} the extension of our proposal to
a graphical model. 

Let $\mathcal{G} = (V, E)$ be an acyclic digraph, where $V = \{1, \ldots, p\}$ \rw{is the vertex set}
and $E \subseteq V \times V$ \rw{is the edge set}. Assume that $1 \prec
\cdots \prec p$ is a \emph{topological order} of $V$, which means that for all $i \in V$, denoting as $\pa(i)$ the parent set in $\mathcal{G}$ of node $i$, it holds that $\pa(i) \subseteq \{1, \ldots, i - 1\}$. The ordered Markov property of Bayesian
networks states that \cite{wermuth1980}
\begin{equation}\label{eq:Uordered} 
X_i \indep X_j |\vecb{X}_{\pa(i)} \text{ for all } j < i \text{ and } j \not\in\pa(i)\rw{,} 
\end{equation}
\rw{where $X_i \indep X_j | \vecb{X}_{\pa(i)}$ stands for conditional independence \cite{dawid1979}, that is, $X_i$ and $X_j$ are independent given $\vecb{X}_{\pa(i)} = \vecb{x}_{\pa(i)}$, for any value of $\vecb{x}_{\pa(i)}$.}
In the multivariate Gaussian distribution, the above conditional independence is equivalent to the regression coefficient $\beta_{ij | 1, \ldots, i - 1}$ being zero (see for example \cite{anderson2003}).
Since $\beta_{ij | 1, \ldots, i - 1} = 0$ if and only if $u_{ji} = 0$, then the zero pattern containing absent edges in $\mathcal{G}$ can be read off from $\mat{U}$ in Equation \eqref{eq:chol:inv}. 

Now allow the ancestral order to be arbitrary, and denote as $\mathcal{U}(\mathcal{G})$ the set of matrices that have positive diagonal and zeros compatible with a given acyclic digraph $\mathcal{G} = (V, E)$; that is, such that if $(j, i) \not\in E$, \rw{$j \neq i$}, then $m_{ji} = 0$ for all $\mat{M} \in \mathcal{U}(\mathcal{G})$. The GBN model
can thus be expressed as
\begin{equation}\label{eq:gbn}
\mathcal{N}(\vecb{0}, \mat{\Sigma}) \text{ s.t. } {\mat{\Sigma}}^{-1}= \mat{W}\mat{W}^t
\text{ with } \mat{W} \in \mathcal{U}(G).
\end{equation}

\begin{remark}\label{rem:gbn:chol}
	Observe that if $\vecb{X} = (X_1, \ldots, X_p)^t \sim \mathcal{N}(\vecb{0}, \mat{\Sigma})$ follows a GBN model with graph $\mathcal{G}$, the parameter matrix $\mat{W}$ of Equation \eqref{eq:gbn} is not the Cholesky factor of $\mat{\Omega} = \mat{\Sigma}^{-1}$. This only occurs when the ancestral order of the nodes in $\mathcal{G}$ is $1 \prec \cdots \prec p$; that is, when the variables follow a natural order (the direct analogue of our model).
	
However, if we denote as $\kappa(\mat{M})$ the permutation of rows
and columns in $\mat{M}$ following the ancestral order of $\mathcal{G}$, then
$\kappa(\mat{W})$ is the upper Cholesky factor of $\kappa(\mat{\Omega})$.
\end{remark}

\section{Sparse Cholesky decomposition of the covariance matrix}\label{sec:graph}
We will hereby introduce the sparse Cholesky decomposition model $\mat{\Sigma} = \mat{T}\mat{T}^t$ for the covariance matrix, which consists \rw{of} allowing an arbitrary zero pattern in $\mat{T}$. The entries of $\mat{T}$ are in correspondence with those in $\mat{L}$ and $\mat{D}$ (see Equation \eqref{eq:chol:cov}, $t_{ij} = l_{ij}\sqrt{d_{ii}}$), and therefore sometimes we will indistinctly refer to each of these matrices .

\subsection{The regression interpretations for the covariance Cholesky factor}
In terms of model estimation, in $\mat{U}$ (Equation~\eqref{eq:u})  
%factor of the covariance matrix $\mat{\Sigma}$ 
each column corresponds to
the parameters of a single recursive regression, 
whereas each entry of matrix $\mat{L}$
(Equation~\eqref{eq:l}) corresponds to a different
regression model.

Fortunately, \cite{rothman2010} gave an alternative regression interpretation
for matrix $\mat{L}$: recalling Equation~\eqref{eq:latent}, where the original
variables $\vecb{X}$ are written as a linear function of the error terms
$\vecb{\mathcal{E}}$, and unfolding this matrix equation, 
we obtain 
\begin{equation}\label{eq:reg:cov} X_i = \sum_{j =
1}^{i - 1}l_{ij}\mathcal{E}_j + \mathcal{E}_i. \end{equation} 

Thus obtaining an
analogue of Equation \eqref{eq:receq}, but now instead of regressing the
original variables, the regression is performed over the \emph{error} terms of
the ordered recursive regressions in Equation \eqref{eq:receq}.

\begin{remark}\label{rem:reg}
	The $(i, j)$ entry of matrix $\mat{L}$ for $j < i$, $l_{ij}$, has therefore two interpretations as a regression coefficient:
	\begin{enumerate}
		\item It is the coefficient of the error $\mathcal{E}_j$ on the regression of $X_i$ over $\mathcal{E}_1, \ldots, \mathcal{E}_{i - 1}$.
		\item It is the coefficient of variable $X_j$ in the regression of $X_i$ over $X_1,\ldots,X_j$.
	\end{enumerate}
	Furthermore, from Equations \eqref{eq:receq} and \eqref{eq:reg:cov} we have another dual interpretation for variable $\mathcal{E}_i$: it is the error of the regression of $X_i$ onto $X_1, \ldots, X_{i - 1}$, but also of $X_i$ onto $\mathcal{E}_1, \ldots, \mathcal{E}_{i - 1}$.
\end{remark}

\subsection{A hidden variable model interpretation}
The sparse Cholesky parametrization of the covariance matrix naturally models 
a hidden variable structure \cite{chandrasekaran2012,yatsenko2015,zorzi2016,basu2019} over ordered Gaussian observables (Equation \eqref{eq:latent}). 
Interpreting the \emph{error} terms $\vecb{\mathcal{E}}$ as latent signal sources, then 
the model is a sort of restricted GBN. 
This interpretation is naturally associated with the first regression coefficients 
outlined in Remark \ref{rem:reg}.

The constraints for this GBN that represents our model are:
\begin{itemize}
	\item All arcs are from hidden variables $\vecb{\mathcal{E}}$ to the observed ones $\vecb{X}$.
	\item There is always an arc from $\mathcal{E}_i$ to $X_i$, for all $i \in \{1, \ldots, p\}$.
	\item For each $i \in \{1, \ldots, p\}$, only variables $\mathcal{E}_1, \ldots, \mathcal{E}_{i - 1}$ can have arcs to $X_i$.
\end{itemize}
Figure \ref{fig:rbm1} represents one such restricted GBN compatible with our model.
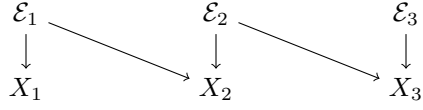
\begin{figure}[h]
	\centering
	\begin{tikzpicture}
	\node (h1) at (0,1) {$\mathcal{E}_1$};
	\node (h2) at (2.5,1) {$\mathcal{E}_2$};
	\node (h3) at (5,1) {$\mathcal{E}_3$};
	\node (x1) at (0,0) {$X_1$};
	\node (x2) at (2.5,0) {$X_2$};
	\node (x3) at (5,0) {$X_3$};
	\draw[->] (h1) -> (x1);
	\draw[->] (h1) -> (x2);
	\draw[->] (h2) -> (x2);
	\draw[->] (h2) -> (x3);
	\draw[->] (h3) -> (x3);
	\end{tikzpicture}
	\caption{Hidden variable model interpretation. We have omitted the index set notation for vertices and directly used variables for clarity.}
	\label{fig:rbm1}
\end{figure}

The sparse Cholesky factor for Figure \ref{fig:rbm1} would have the following lower triangular pattern
\begin{equation}
	\begin{pmatrix}
		 &  & \\
		* &  & \\
		0 & * & 
	\end{pmatrix},
\end{equation}
where an asterisk means a non-zero entry. \rw{Observe that the zero value in entry $(3, 1)$ corresponds to the missing edge from $\mathcal{E}_1$ to $X_3$ in Figure \ref{fig:rbm1}.}

\subsection{A graphical model extension for unordered variables}
In direct analogy with GBNs and Equation \eqref{eq:gbn}, we \rw{can} define a graphical model which is parametrized by a Cholesky factorization, up to a permutation. Let $\mathcal{G} = (V, E)$ be an arbitrary given acyclic digraph, and denote with $\prec$ its ancestral order. Let $\pr_{\prec}(i) = \{j \in V: j \prec i\}$ be the \emph{predecessor} set of a node $i \in V$.

Analogous to $\mathcal{U}(\mathcal{G})$ in GBNs, denote as $\mathcal{L}(\mathcal{G})$ the set of matrices which
have a zero pattern compatible with $\mathcal{G}$. By this we mean that if $(j, i) \not\in E$, \rw{$j \neq i$}, then $t_{ij} = 0$.
We may define the Gaussian graphical model (compar\rw{able to} Equation \eqref{eq:gbn}) 
\begin{equation}\label{eq:model} 
\mathcal{N}(\vecb{0}, \mat{\Sigma}) \text{ s.t. } \mat{\Sigma} = \mat{T}\mat{T}^t \text{ with }
\mat{T} \in \mathcal{L}(\mathcal{G}).  
\end{equation}

\begin{remark}\label{rem:model}
	As \rw{in the case of} GBNs (Remark \ref{rem:gbn:chol}), the parameter matrix $\mat{T}$ in Equation \eqref{eq:model} \rw{is} lower triangular, and thus coincide\rw{s} with the Cholesky factor of $\mat{\Sigma}$, when the variables are already ancestrally ordered. Otherwise, the sparse Cholesky factorization model applies when the ancestral order $\prec$ is known, that is, denoting as $\kappa(\mat{M})$ the permutation of rows and columns of $\mat{M}$ following $\prec$, then $\kappa(\mat{T})$ is the Cholesky factor of $\kappa(\mat{\Sigma})$.
\end{remark}

This extension has a more natural correspondence with the second regression interpretation of Remark \ref{rem:reg}, which holds after reordering rows and columns of $\mat{T}$ to \rw{comply with} $\prec$, as follows. First note that Equation \eqref{eq:l} holds for an unordered version of $\mat{L}$, and thus we have that $t_{ij} \propto \beta_{ij|\pr_{\prec}(j)}$ for $j \prec i$. Therefore, we retrieve a sort of \emph{ordered Markov property} (compar\rw{able to} Equation \eqref{eq:Uordered})
\begin{equation}
	X_i \indep X_j | \vecb{X}_{\pr_{\prec}(j)} \text{ for all } j\prec i \text{ and }j\not\in\pa(i).
\end{equation}

A simple example of an arbitrary graph would be \rw{that} in Figure \ref{fig:rbm3}.
In this model, factor $\mat{T}$ would be lower triangular after reordering its rows and columns following the ancestral ordering of the graph, $2 \prec 1 \prec 3$,
\begin{equation*}
	\mat{T} = \begin{pmatrix}
	* & * & 0\\
	0 & * & 0 \\
	* & 0 & *
	\end{pmatrix},\, 
	\kappa(\mat{T}) = \begin{pmatrix}
	* & 0 & 0\\
	* & * & 0\\
	0 & * & *
	\end{pmatrix}.
\end{equation*}
\begin{figure}[h]
	\centering
	\begin{tikzpicture}
	\node (x1) at (0,0) {$X_2$};
	\node (x2) at (2.5,0) {$X_1$};
	\node (x3) at (5,0) {$X_3$};
	\draw[->] (x1) -> (x2);
	\draw[->] (x2) -> (x3);
	\end{tikzpicture}
	\caption{Graph with ancestral order $2 \prec 1 \prec 3$. We have omitted the index set notation for vertices and directly used variables for clarity.}
	\label{fig:rbm3}
\end{figure}
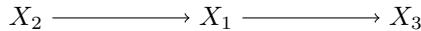

In the example of Figure \ref{fig:rbm1}, where variables already exhibit a natural order, the graph that would represent such interactions would be \rw{that} in Figure \ref{fig:rbm2}, whose parameter matrix $\mat{T}$ is already lower triangular (see also Remark \ref{rem:model}).
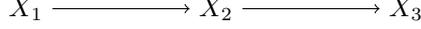
\begin{figure}[h]
	\centering
	\begin{tikzpicture}
	\node (x1) at (0,0) {$X_1$};
	\node (x2) at (2.5,0) {$X_2$};
	\node (x3) at (5,0) {$X_3$};
	\draw[->] (x1) -> (x2);
	\draw[->] (x2) -> (x3);
	\end{tikzpicture}
	\caption{Graph corresponding to the model in Figure~\ref{fig:rbm1}. We have omitted the index set notation for vertices and directly used variables for clarity.}
	\label{fig:rbm2}
\end{figure}

A further study of this graphical model extension is out of the scope of this work, since we focus on naturally ordered variables from hidden signal sources, but we have discussed it here for completeness and interest for future work.

\section{Model estimation}\label{sec:learn}
We will first review two regression-based existing estimators for this model that can be found in \cite{rothman2010}, and then will detail our proposed penalized matrix loss estimation method.

\subsection{Existing work: Banding and lasso}
Throughout this section denote as $\vecb{x}_i$ a sample of size $N$ corresponding to variable $X_i$, where $\vecb{X} = (X_1, \ldots, X_p)$ is assumed to follow the regression model of Equation \eqref{eq:reg:cov}.

The banding estimate for $\mat{T}$ builds upon the respective for $\mat{L}$. The idea is to estimate by standard least squares only the first $k$ sub-diagonals of $\mat{L}$ and set the rest to zero. Specifically, if $b(k) = \max(1, i - k)$ denotes the starting index, with respect to the band parameter $k$, of the $i$-th row vector $\vecb{l}_i = (l_{ib(k)}, \ldots, l_{ii-1})^t$ in matrix $\mat{L}$, then, letting $\hat{\vecb{\varepsilon}}_{b(k)} = \vecb{x}_{b(k)}$,
\begin{equation}\label{eq:optim:lsband}
\begin{aligned} 
&\hat{\vecb{l}}_i =\arg\min_{\vecb{l}_i}\, \lVert \vecb{x}_i - (\hat{\vecb{\varepsilon}}_{b(k)} \cdots \hat{\vecb{\varepsilon}}_{i - 1})\vecb{l}_i\rVert_2^2,\\
&\hat{\vecb{\varepsilon}}_i = \vecb{x}_i - (\hat{\vecb{\varepsilon}}_{b(k)} \cdots \hat{\vecb{\varepsilon}}_{i - 1})\hat{\vecb{l}}_i,\\
&\hat{d}_{ii} = \frac{1}{N}\lVert\hat{\vecb{\varepsilon}}_i\rVert_2^2,
\end{aligned} 
\end{equation}
where $\lVert \cdot \rVert_2$ is the Euclidean or $l_2$-norm. In order to ensure positive definiteness of all matrices involved in the computations, $k$ must be smaller than $\min(N - 1, p)$ \cite{rothman2010}. Matrix $\widehat{\mat{T}} = \widehat{\mat{L}}\sqrt{\widehat{\mat{D}}}$ inherits the band structure from $\widehat{\mat{L}}$. The main drawback of this banding estimator is the restrictive zero pattern that it imposes. \rw{Note also that this method requires previous selection of the parameter $k$.}

An alternative to banding which gives more flexibility over the zero pattern is to use lasso regression over Equation \eqref{eq:reg:cov},
\begin{equation}\label{eq:optim:lasso}
\begin{aligned}
&\hat{\vecb{l}}_i =\arg\min_{\vecb{l}_i}\, \lVert \vecb{x}_i - (\hat{\vecb{\varepsilon}}_{1} \cdots \hat{\vecb{\varepsilon}}_{i - 1})\vecb{l}_i\rVert_2^2 + \lambda\lVert\vecb{l}_i\rVert_1,\\
&\hat{\vecb{\varepsilon}}_i = \vecb{x}_i - (\hat{\vecb{\varepsilon}}_{1} \cdots \hat{\vecb{\varepsilon}}_{i - 1})\hat{\vecb{l}}_i,\\
&\hat{d}_{ii} = \frac{1}{N}\lVert\hat{\vecb{\varepsilon}}_i\rVert_2^2,
\end{aligned} 
\end{equation}
where this time $\hat{\vecb{\varepsilon}}_1 = \vecb{x}_1$, $\vecb{l}_i = (l_{i1}, \ldots, l_{ii-1})^t$, and $\lVert \cdot \rVert_1$ is the $l_1$-norm \rw{and $\lambda > 0$ is the penalisation parameter}. Observe that such penalty could be replaced with any other sparsity inducing penalty over $\vecb{l}_i$, for example, \cite{rothman2010} discuss\rw{ed} also the nested lasso \cite{levina2008} because the sparsity pattern is preserved in the covariance matrix. We have discarded such penalty because we are primarily interested in an arbitrary zero pattern in $\widehat{\mat{T}}$ and not worried \rw{about} the induced one in $\widehat{\mat{\Sigma}}$.

\subsection{Penalized gradient-based learning of the covariance sparse Cholesky factor}
The above approaches are based on the regression interpretation of the sparse Cholesky factor model, Equation \eqref{eq:reg:cov}. By contrast, we propose to directly estimate all of the parameters, that is, matrix $\mat{T}$ entries, by solving one optimization problem. This allows for example to recover maximum likelihood estimates, as well as have the potential to be easily extended to the graphical model interpretation (Equation \eqref{eq:model}) following an approach similar to \cite{zheng2018}.

Denote as $\mat{\Sigma}(\mat{T})$ the parametrization of a covariance matrix $\mat{\Sigma}$ with its Cholesky factor $\mat{T}$ (Equation \eqref{eq:chol:cov}).
We propose to learn a sparse model for $\mat{T}$ by solving the following optimization problem
\begin{equation}\label{eq:optim}
	\arg \min_{\mat{T}}  \phi\left( \mat{\Sigma}(\mat{T}) \right) + \lambda \lVert \mat{T} \rVert_1, 
\end{equation}
where $\phi(\cdot)$ is a differentiable loss function over covariance matrices, $\lambda \rw{> 0}$ is the penalisation parameter, and $\lVert \cdot \rVert_1$ is the $l_1$-norm \rw{for matrices}, which induces sparsity on $\mat{T}$ \cite{bach2012}. Note that, as in the regression case, the $l_1$ penalty could be replace\rw{d} with any other sparsity inducing matrix norm.
Solving Equation \eqref{eq:optim} can be done via proximal gradient algorithms, which have optimal convergence rates among first-order methods \cite{bach2012} and are tailored for a convex $\phi$ but also competitive in the non\rw{-}convex case \cite{varando2020}. In this work we have used two such smooth loss functions: the negative Gaussian log-likelihood and the triangular Frobenious norm. 

The negative Gaussian log-likelihood for a sample $\vecb{x}_1, \ldots, \vecb{x}_N$ when $\vecb{\mu}$ is assumed to be zero is \cite{anderson2003} proportional to
\begin{equation}\label{eq:nll}
	\phi_{NLL}(\mat{\Sigma}) = \ln\det(\mat{\Sigma}) + \tr(\mat{\Sigma}^{-1}\hat{\mat{\Sigma}}),
\end{equation}
where $\hat{\Sigma} = 1/N\sum_{n = 1}^N\vecb{x}_n\vecb{x}_n^t$ is the maximum likelihood estimator for $\Sigma$. On the other hand, the Frobenious norm loss that we also consider is
\begin{equation}\label{eq:frob}
	\phi_{FR}(\mat{\Sigma}) = \lVert \mat{\Sigma} - \hat{\mat{\Sigma}} \rVert_{F}^2 = \sum_{i = 1}^p\sum_{j = 1}^p(\sigma_{ij} - \hat{\sigma}_{ij})^2.
\end{equation}

Both $\phi_{NLL}$ and $\phi_{FR}$ are smooth, and in general $\phi_{NLL}$ renders the optimization problem of Equation \eqref{eq:optim} non-convex \cite{boyd2004}, whereas $\phi_{FR}$ is a convex function. \rw{I}n Appendix \ref{app:gradient} we provide the details on gradient computations for $\phi_{NLL}$ and $\phi_{FR}$, as well as the proximal algorithm pseudocode.

\section{Experiments}\label{sec:exp}

In all of the experiments we compare the four estimation methods outlined in the previous section: banding $\mat{T}$ (Equation \eqref{eq:optim:lsband}), lasso regressions (Equation \eqref{eq:optim:lasso}), and our two proposed penalized losses $\phi_{NLL}$ (Equation \eqref{eq:nll}) and $\phi_{FR}$ (Equation \eqref{eq:frob}). These four methods will be denoted in the remainder as $\mband$, $\mlasso$, $\mglik$ and $\mgfrob$, respectively. All data was standardized, and therefore in our proposed losses \rw{the sample correlation matrix} was used instead of $\widehat{\mat{\Sigma}}$. The implementation of our loss optimization methods $\mgfrob$ and $\rw{\mglik}$ can be found in the
R \cite{rcore} package \emph{covchol}\footnote{Version \rw{under} development:
{https://github.com/irenecrsn/covchol}. The experiments described throughout this section can be reproduced following the instructions and using the code available at the repository {https://github.com/irenecrsn/chol-inv}.}.

\subsection{Simulation}
We have conducted simulation experiments in two different scenarios. First, because as the work of Rothman et al. \cite{rothman2010} is the most directly related \rw{to} our model, we have replicated their simulation setting for completeness. Therein they select three fixed covariance matrices with either a fixed known banded sparsity pattern or no zeros at all. By contrast, in the second experiment we explore how the methods perform when the sparsity pattern is arbitrary.

In both experiments we have measured two statistics in order to asses\rw{s} both model selection (how precise the zero pattern is recovered) and estimation (how numerically close is the retrieved matrix to the original one). These metrics are evaluated over $\mat{\Sigma}$ and $\widehat{\mat{\Sigma}}$ in the second experiment instead of $\mat{T}$ and $\widehat{\mat{T}}$, for better comparability with \cite{rothman2010}. Specifically, we use the $\fone$ score for evaluating the zero pattern,
\begin{equation}\label{eq:f1}
\fone(\mat{T}, \widehat{\mat{T}}) = 2\frac{\tpr(\mat{T}, \widehat{\mat{T}})\tdr(\mat{T}, \widehat{\mat{T}})}{\tpr(\mat{T}, \widehat{\mat{T}}) + \tdr(\mat{T}, \widehat{\mat{T}})},
\end{equation}
where $\tpr$ and $\tdr$ are the true positive and discovery rate, respectively,
\begin{align*}
&\tpr(\mat{T}, \widehat{\mat{T}}) = \frac{|\{t_{ij} \neq 0 \text{ and } \hat{t}_{ij} \neq 0\}|}{|\{t_{ij} \neq 0\}|},\\
&\tdr(\mat{T}, \widehat{\mat{T}}) = \frac{|\{t_{ij} \neq 0 \text{ and } \hat{t}_{ij} \neq 0\}|}{|\{\hat{t}_{ij} \neq 0\}|};
\end{align*}
and the induced matrix $1$-norm,
\begin{equation*}
\norm(\mat{T}, \widehat{\mat{T}}) = \lVert \mat{T} - \hat{\mat{T}} \rVert_1 = \max_{1 \leq j \leq p}\sum_{i = 1}^p \lvert \hat{t}_{ij} - t_{ij} \rvert.
\end{equation*}

\subsubsection{Fixed covariance matrices}
The fixed covariance matrices used in the simulations by Rothman et al. \cite{rothman2010} are:
\begin{itemize}
	\item The autoregressive model of order $1$, where the true covariance
	matrix $\mat{\Sigma}_1$ has entries $\sigma_{ij} = \rho^{|i - j|}$, with $\rho = 0.7$.
	\item The $4$-banded correlation matrix $\mat{\Sigma}_2$ with entries
	$\sigma_{ij} = 0.4\mathbb{I}(|i-j| = 1) + 0.2\mathbb{I}(2\leq|i-j|\leq 3) +
	0.1\mathbb{I}(|i-j| =4)$ for $i \neq j$, $\mathbb{I}$ being the set indicator function.
	\item The dense correlation matrix $\mat{\Sigma}_3$ with $0.5$ in all of
	its entries except for the diagonal.
\end{itemize}
Similarly to \cite{rothman2010}, we use for matrix dimension $p$ values ranging from $30$ to $500$, and draw from
the respective distribution $\mathcal{N}(\vecb{0}, \mat{\Sigma}_i)$, $i \in \{1, 2, 3\}$, a sample of size $200$ which allows to visualize both the $p > N$ and $p < N$ scenarios. With this experiment we measure how sparsity inducing methods for learning $\mat{T}$ behave in scenarios which are not specially suited for them, except for $\mband$ and $\mat{\Sigma}_2$.

Figure \ref{fig:sim:fixedsigmas} shows the results of the
aforementioned simulation scenario. The norms are shown in logarithmic scale for a better comparison between the methods, since there were significant disparities.
\begin{figure}[h]
	\centering
	\includegraphics[width = \linewidth]{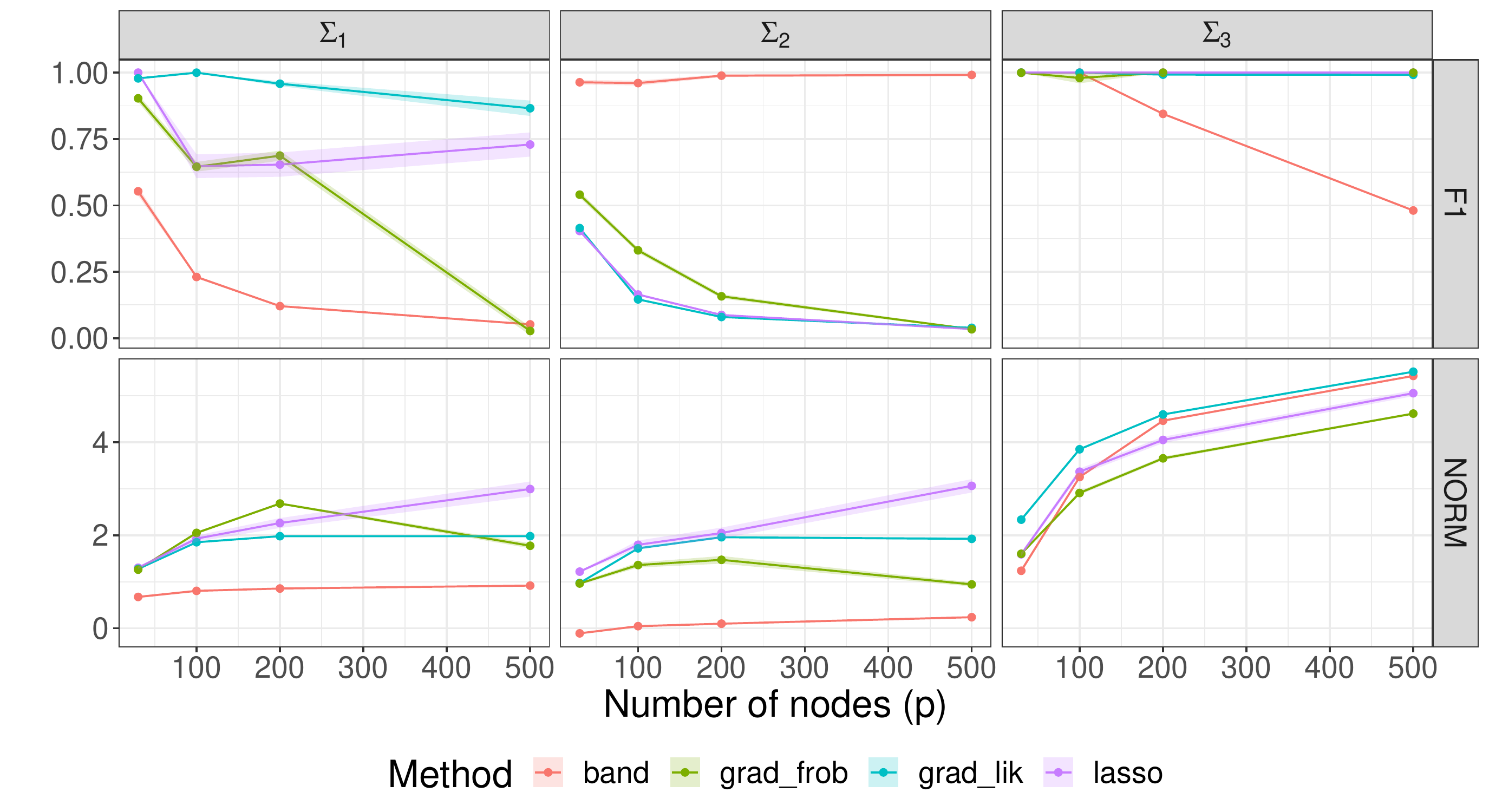}
	\caption{Results of the simulation experiment \rw{set out in} \cite{rothman2010}. Metric $\norm$ is in logarithmic scale.}
	\label{fig:sim:fixedsigmas}
\end{figure}
$\mat{\Sigma}_1$ and $\mat{\Sigma}_3$ are both dense matrices, with $\mat{\Sigma}_1$ having entries that decay when moving away from the diagonal. We observe that the inexistent sparsity pattern is best approximated by $\mglik$ and $\mlasso$, but interestingly $\mgfrob$ and $\mband$ achieve competitive norm results, sometimes even outperforming the rest. Matrix $\mat{\Sigma}_2$ is banded, therefore\rw{,} as expected\rw{,} $\mband$ achieves both the highest $\fone$ measure and lowest norm difference. 

\subsubsection{Arbitrary sparsity pattern in the Cholesky factor $\mat{T}$}

In this experiment the sparse Cholesky factor $\mat{T}$ is simulated using essentially the method of \cite[Algorithm 3]{cordoba2020} with a random acyclic digraph to represent zero pattern, that is, the latent structure (see Figure \ref{fig:rbm1}). Observe that in general this does not yield a uniformly sampled Cholesky factor, but it is more flexible that the standard diagonal dominance procedure, see \cite{cordoba2020} for further discussion on this issue. We \rw{generate} three Cholesky factors $\mat{T}_i$ with a proportion of $i/p$ non-zero entries (density/average edge number of the corresponding acyclic digraph), where $i \in \{1, 2, 3\}$. Then we draw a sample of size $200$ from $\mathcal{N}(\vecb{0}, \mat{T}_i\mat{T}_i^t)$ for $p$ ranging between $30$ and $500$, as in the previous experiment. 

Figure \ref{fig:sim:randl} depicts the results.
\begin{figure}[h]
	\centering
	\includegraphics[width = \linewidth]{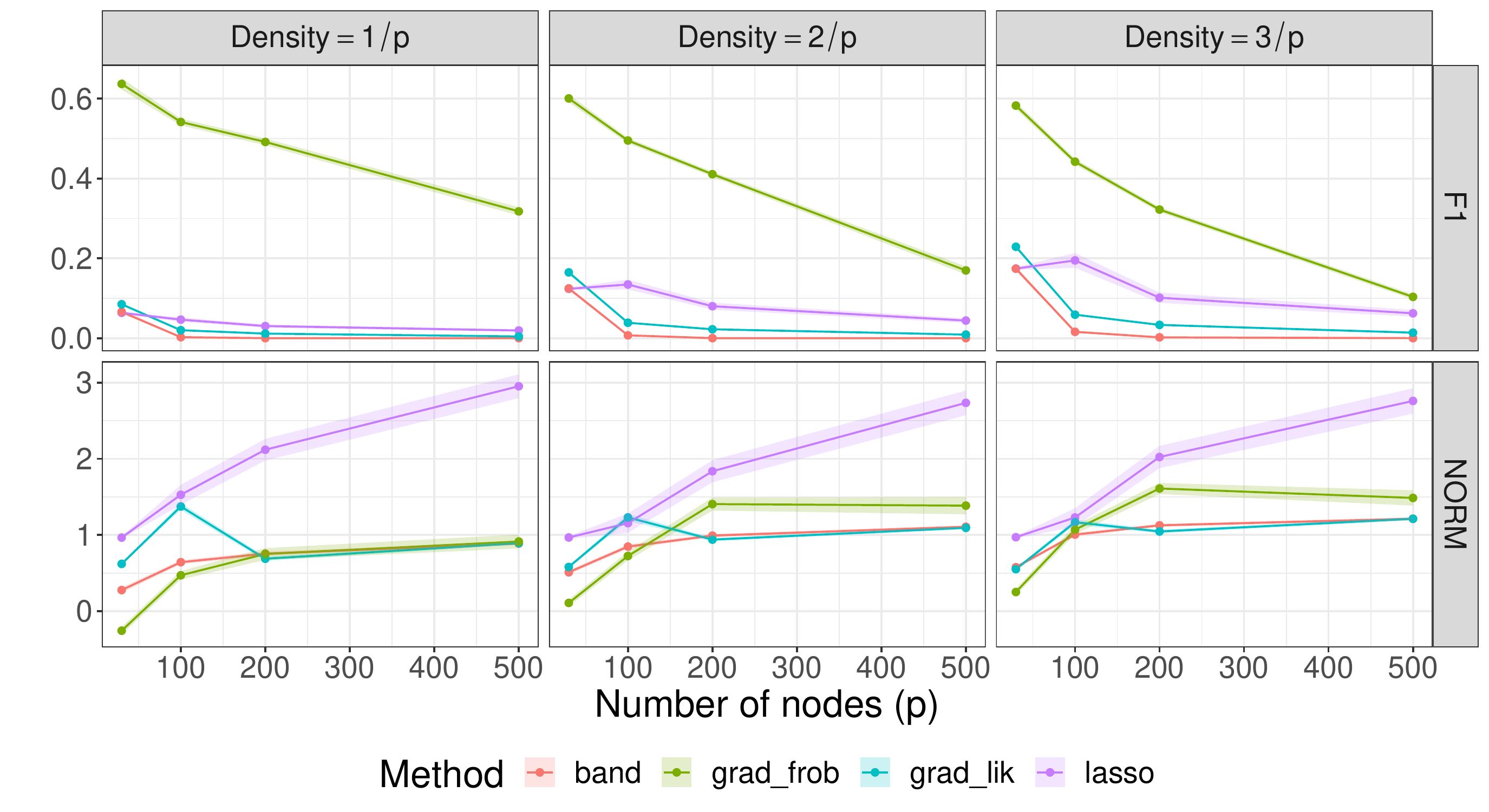}
	\caption{Results of the simulation experiment for an arbitrary sparsity 	pattern in $\mat{T}$. Metric $\norm$ is in logarithmic scale. \rw{Density indicates the average proportion of lower triangular non-zero entries in the simulated $\mat{T}$ Cholesky factors.}}
	\label{fig:sim:randl}
\end{figure}
Note that as the density decreases, the $\fone$ score and matrix norm results slightly worsen, but in general the methods behaviour is maintained. The $\mband$ estimator exhibits a performance similar to the previous experiment: although achieving a small $\fone$ score, it has a relatively small matrix norm difference. This behaviour is shared in this case with $\mglik$, which has in general poor performance. However, the worst performing method is $\mlasso$, which neither is able to recover the sparsity pattern, nor gets numerically close to the original Cholesky factor (it has a significantly high value for the norm difference). Conversely, method $\mgfrob$ has the best performance, with a significantly high $\fone$ score when compared with the rest and competitive or best norm difference results.

\subsection{Real data}

In this section we have selected two data sets from the UCI machine learning repository \cite{dua2020} where a natural order arises among the variables, and which are therefore suitable for our sparse covariance Cholesky factorization model. Both of them are labelled with a class variable, therefore after estimating the respective Cholesky factors with each method, we assess classification performance.

For classifying a sample $\vecb{x}$ we use quadratic discriminant analysis \cite{rothman2010}, where $\vecb{x}$ is classified in the class value $c$ that maximizes
\begin{equation*}
\begin{aligned}
\ln\hat{f}(\vecb{x}, c) 
&= \ln\hat{f}(\vecb{x} | c) + \ln\hat{f}(c)\rw{,}
\end{aligned}
\end{equation*}
where $\hat{f}(c)$ is the proportion of observations for class $c$ and we have expressed $\hat{f}(\vecb{x} | c)$ in terms of $\mat{T}$ instead of $\mat{\Sigma}$,
\begin{equation}\label{eq:nllclass}
\begin{aligned}
\ln\hat{f}(\vecb{x} | c) 
&\propto \frac{1}{2}\ln\det(\widehat{\mat{\Sigma}}_c) - \frac{1}{2}(\vecb{x} - \hat{\vecb{\mu}}_c)^t \widehat{\mat{\Sigma}}_c^{-1} (\vecb{x} - \hat{\vecb{\mu}}_c)\\
&= \ln\det(\widehat{\mat{T}}_c) - \frac{1}{2}(\vecb{x} - \hat{\vecb{\mu}}_c)^t \widehat{\mat{T}}_c^{-t}\widehat{\mat{T}}_c^{-1} (\vecb{x} - \hat{\vecb{\mu}}_c)\\
&= \sum_{i = 1}^p\ln t_{ii} - \frac{1}{2}(\widehat{\mat{T}}_c^{-1}(\vecb{x} - \hat{\vecb{\mu}}_c))^t \widehat{\mat{T}}_c^{-1}(\vecb{x} - \hat{\vecb{\mu}}_c),
\end{aligned}
\end{equation}
with $\hat{\vecb{\mu}}_c$, $\widehat{\mat{\Sigma}}_c$ and $\widehat{\mat{T}}_c$ the respective estimates from training samples belonging to class $c$.

Finally, for evaluating classification performance we have used:
\begin{itemize}
	\item The $\fone$ score, already defined in Equation \eqref{eq:f1} in terms of $\tdr$ and $\tpr$, but adapted to classification instead of matrix entries.
	\item The true negative rate, $\tnr$, since it is not contained \rw{in} the $\fone$ score, which is the proportion of observations that have been correctly \emph{not} classified as class $c$.
	\item The accuracy, $\acc$, which measures the proportion of observations that have been correctly assigned a class. Observe that this last metric, unlike the other two, is not class-dependent, but instead global.
\end{itemize}

\subsubsection{Sonar: Mine vs. Rocks}

The first real data set we explore is the \emph{Connectionist Bench (Sonar, Mines vs. Rocks)} data set, which contains numeric observations from a sonar signal bounced at both a metal cylinder (mine) and rocks. It contains $60$ variables and $208$ observations. Each variable corresponds to the energy within a certain frequency band, integrated over a period of time, in increasing order. Each observation represents a different beam angle for the same object under detection. Over this data set the objective is to classify a sample as rock or mine. This data set was also analysed in \cite{rothman2010}, but without the expression in terms of $\mat{T}$ for Equation \eqref{eq:nllclass} and only using method $\mband$ for $\mat{T}$.

As a first exploratory step, we have applied each of the methods for learning the Cholesky factor $\mat{T}$ to all the instances labelled as $M$ (mines), and $R$ (rocks), which we show as a heatmap in Figure \ref{fig:sonar:chols}.
\begin{figure}[h]
	\centering
	\includegraphics[width = \linewidth]{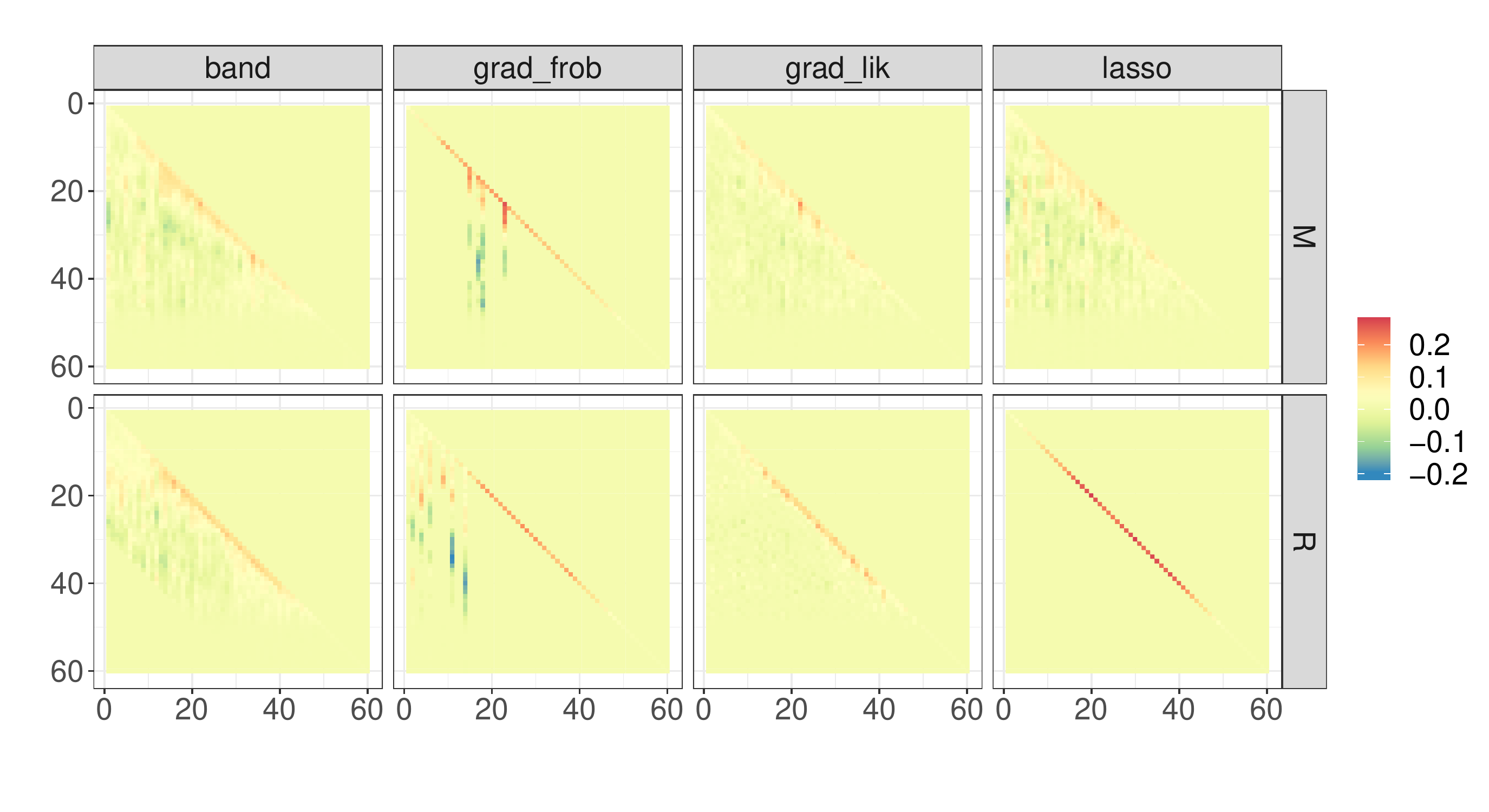}
	\caption{Heatmaps of the Cholesky factors of rock and mine samples. M: Mines; R: Rocks.}
	\label{fig:sonar:chols}
\end{figure}
The Cholesky factor for mines retrieved by $\mglik$ and $\mlasso$ look fairly similar, whereas the one for rocks that $\mlasso$ estimates is nearly diagonal. Bands can be clearly observed from heatmaps by $\mband$, and all methods impose zero values for variables near to or higher than $50$, which could be motivated by the problem characteristics and hint at high sonar frequencies being nearly noiseless. This latter behaviour is inherited by the covariance matrices, whose heatmaps are shown in Appendix \ref{app:fig}, Figure \ref{fig:sonar:covs}. The entries in the Cholesky factor estimated by $\mgfrob$ are the most extreme, since most of them are zero, and the ones which are not have the highest and lowest values among all the estimates recovered. Despite the outlined differences among the Cholesky factors, the induced covariance matrices exhibit rather similar heatmaps and eigenvalues (Figures \ref{fig:sonar:covs} and \ref{fig:sonar:scree} in Appendix \ref{app:fig}).

For the quadratic discriminant analysis we have used leave-one-out cross-validation, since the sample size was sufficiently small to allow it. Table \ref{tab:sonar} contains the results thus obtained. We see that $\mlasso$ is the method that performs poorest overall. Conversely, $\mband$ is arguably the best for this problem, except for the $\tnr$ of rock samples, which is highest for $\mgfrob$. However, observing the rest of statistics for $\mgfrob$, it can be deduced that this method over-classifies samples as mines: it has the lowest $\tnr$ for them. On the other hand, $\mglik$ performs competitively for this problem, but is in general outperformed or matched by $\mband$. Since the sonar behaviour hints at a band structure for the covariance (frequency patterns being related \rw{to} those close to them), and therefore for its Cholesky factor, the good performance of $\mband$ could be expected.
\begin{table}[h]
	\centering
	\begin{tabular}{lllll}
		& $\mband$ & $\mgfrob$ & $\mglik$ & $\mlasso$\\
		$\tnr$ (M) & \textbf{0.78} & 0.08 & \textbf{0.78} & 0.62\\
		$\fone$ (M) & \textbf{0.8} & 0.7 & 0.55 & 0.33\\
		$\tnr$ (R) & 0.79 & \textbf{0.97} & 0.45 & 0.26\\
		$\fone$ (R) & \textbf{0.78} & 0.15 & 0.65 & 0.5\\
		$\acc$ & \textbf{0.79} & 0.56 & 0.61 & 0.43
	\end{tabular}
	\caption{Statistics for the sonar problem. M: mines; R: rocks.}
	\label{tab:sonar}
\end{table}

\subsubsection{Wall-\rw{F}ollowing \rw{R}obot \rw{Navigation}}
The other real data set we use is the \emph{Wall-Following Robot Navigation} one. Here a robot moves in a room following the wall clockwise. It contains $5456$ observations and $24$ variables. Each variable corresponds to the value of an ultrasound sensor, which are arranged circularly over the robot's body. Here the increasing order reflects the reference angle where the sensor is located. Since the robot is moving clockwise, here the classification task is between four possible class values: Move\rw{-}Forward, Sharp\rw{-}Right\rw{-}Turn, Slight\rw{-}Left\rw{-}Turn or Slight\rw{-}Right\rw{-}Turn.

As in the previous problem, we obtain the Cholesky factors for each of the movements, depicted in Figure \ref{fig:robot:chols}.
We notice that $\mgfrob$ outputs a similar matrix (except for the Slight\rw{-}Left\rw{-}Turn movement) than the other three methods, which means that the extreme behaviour we observed in the sonar experiment was problem related. By contrast, the Cholesky factor for Slight\rw{-}Left\rw{-}Turn is nearly diagonal. The other matrices are rather similar among the methods, with $\mband$ notably choosing in general a high banding parameter $k$ (few to no bands). Here we have a similar structure as in the sonar problem: we appreciate for all the movements except Slight\rw{-}Left\rw{-}Turn that most entries close to the diagonal are positive, whereas distant ones are frequently negative. Regarding Slight\rw{-}Left\rw{-}Turn, these matrices are the sparsest and have near zero values on the diagonal. Since the robot is moving clockwise, this movement is related to obstacles, therefore it could hint that sensor readings are correctly identifying them.
\begin{figure}[h]
	\centering
	\includegraphics[width = \linewidth]{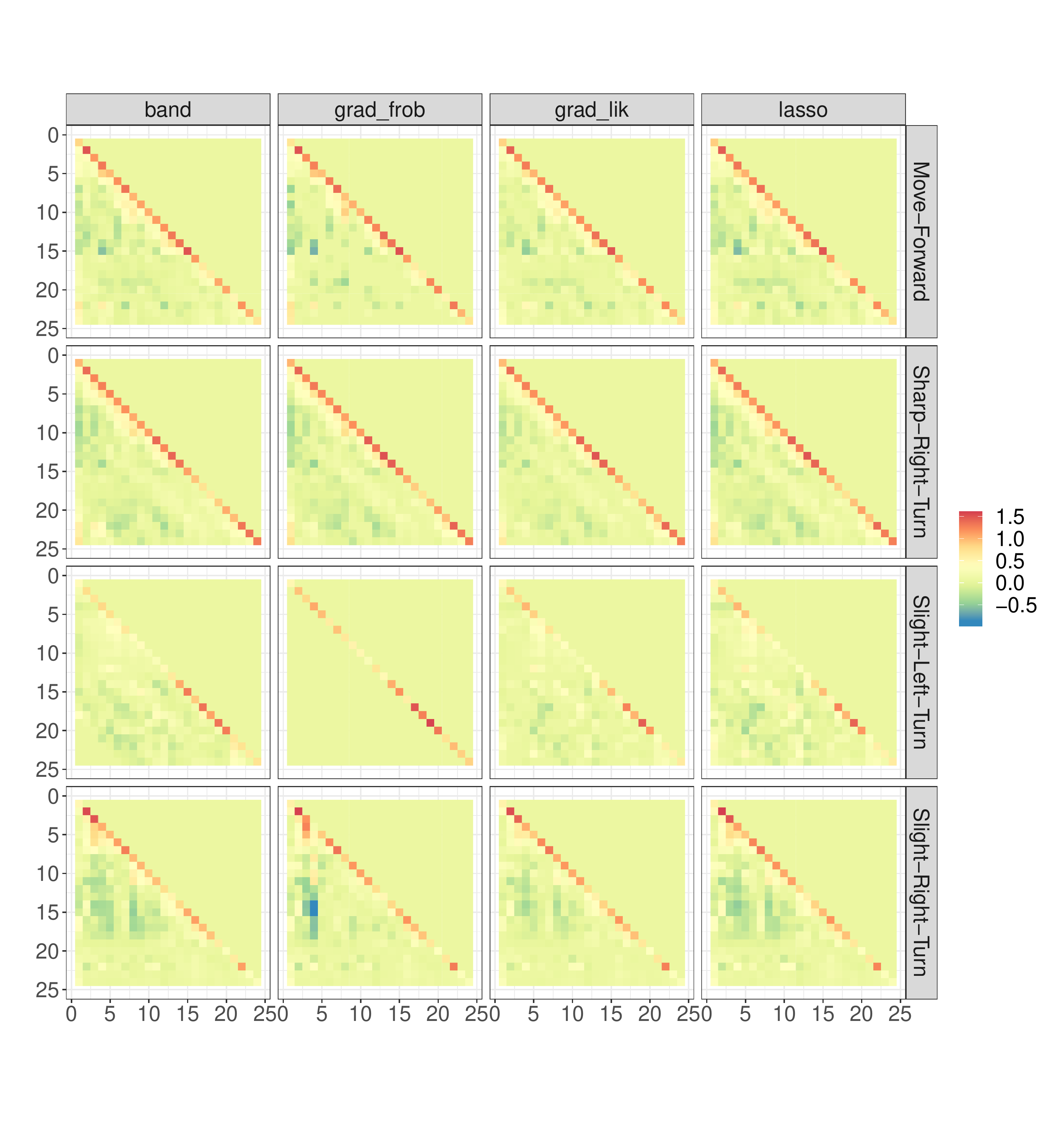}
	\caption{Heatmaps of the Cholesky factor of the wall-following robot \rw{navigation} samples.}
	\label{fig:robot:chols}
\end{figure}

In this problem we have a larger sample size, and therefore we split the data into train and test, with half of the samples on each set. The classification results are shown in Table \ref{tab:robot}. We observe that all of the methods perform arguably good, in fact they achieve nearly identical accuracy. It is noticeable how competitive are $\mlasso$ and $\mglik$, which performed much worse in the sonar problem. We also notice that arguably the best results are obtained for the Slight\rw{-}Left\rw{-}Turn movement, which confirms our previous intuition over the heatmaps about sensors correctly identifying obstacles. The worst performance over all methods is for the Slight\rw{-}Right\rw{-}Turn movement, but is not \rw{noteworthy} when compared with the rest (except for Slight\rw{-}Left\rw{-}Turn).
\begin{table}[h]
	\centering
	\begin{tabular}{lllll}
		& $\mband$ & $\mgfrob$ & $\mglik$ & $\mlasso$\\
		$\tnr$ (MF) & \textbf{0.87} & 0.86 & 0.85 & 0.84\\
		$\fone$ (MF) & \textbf{0.64} & \textbf{0.64} & 0.61 & 0.62\\
		$\tnr$ (SHR) & \textbf{0.89} & 0.88 & 0.87 & 0.85\\
		$\fone$ (SHR) & 0.72 & 0.72 & \textbf{0.73} & \textbf{0.73}\\
		$\tnr$ (SLL) & 0.96 & 0.95 & \textbf{0.98} & \textbf{0.98}\\
		$\fone$ (SLL) & 0.67 & 0.65 & \textbf{0.72} & 0.7\\
		$\tnr$ (SLR) & 0.82 & 0.83 & 0.81 & \textbf{0.84}\\
		$\fone$ (SLR) & 0.56 & \textbf{0.57} & 0.55 & \textbf{0.57}\\
		$\acc$ & \textbf{0.66} & \textbf{0.66} & 0.65 & \textbf{0.66}
	\end{tabular}
	\caption{Statistics for the robot problem. MF: Move\rw{-}Forward; SHR: Sharp\rw{-}Right\rw{-}Turn; SLL: Slight\rw{-}Left\rw{-}Turn; SLR: Slight\rw{-}Right\rw{-}Turn.}
	\label{tab:robot}
\end{table}

\subsection{Discussion}
We can draw several conclusions from both the simulated and real experiments. \rw{First}, we report in Figure \ref{fig:time} the execution time for each method, where it can be seen that $\mglik$ is the slowest one and $\mband$ is the fastest.
\begin{figure}[h]
	\centering
	\includegraphics[width = \linewidth]{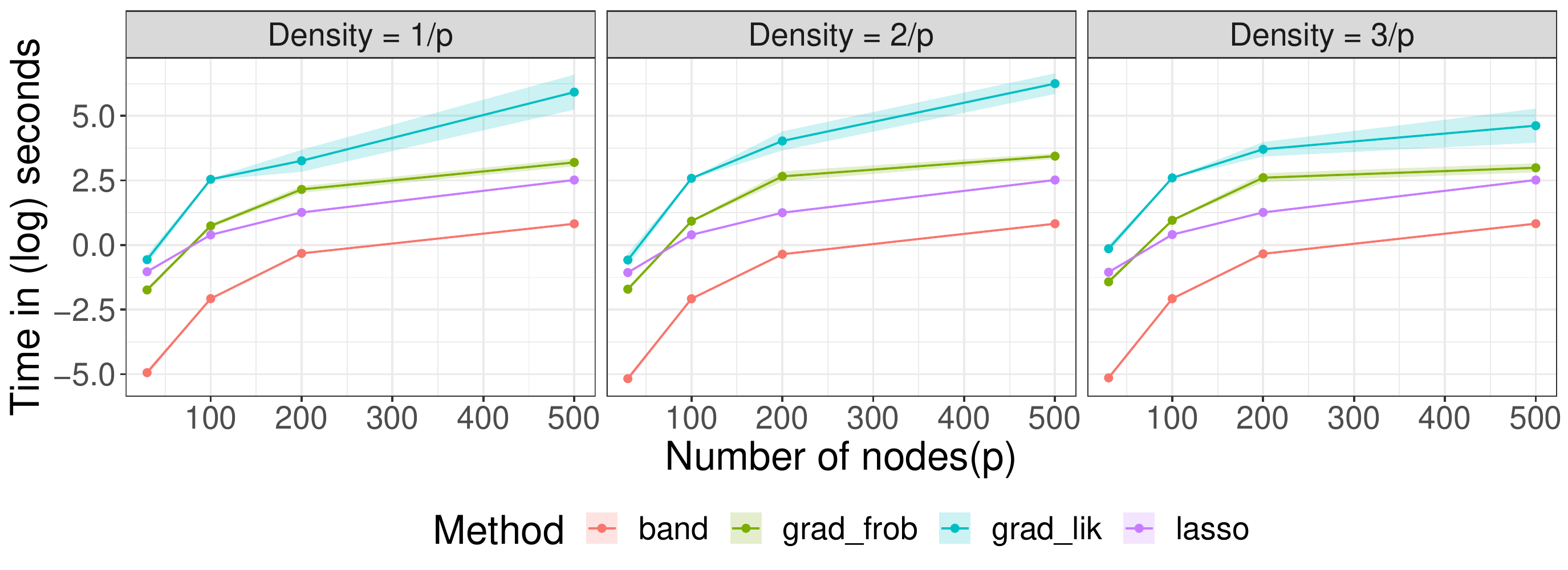}
	\caption{Logarithm of the execution time (in seconds) for each of the methods under evaluation. \rw{Density indicates the average proportion of lower triangular non-zero entries in the simulated $\mat{T}$ Cholesky factors.}}\label{fig:time}
\end{figure}

Whenever there is a clear dependence between variables that are close in the ordering, such as in the sonar example, the $\mband$ method could be preferred, because it is the one that more naturally approximates the structure induced in the Cholesky factor (as happened in the sonar example).

Our new proposed method $\mgfrob$ has shown to be competitive both in execution time as well as recovery results: when interested in model selection, that is, how accurately zeros in the Cholesky factor are estimated, it yields the best results. Conversely, $\mglik$ has shown to be the most robust: in simulations it achieved reasonable performance even when the true covariance matrix was dense, and it also performed competitively in the sonar example, which was mostly suited for $\mband$ as we discussed.

Finally, $\mlasso$ has achieved overall poor results, except for the wall-following \rw{robot navigation} data set. Specially, in simulations it failed to correctly recover the zero pattern in the Cholesky factor and was the numerically farthest away from the true matrix. Despite this, it is the second fastest of the four methods, so when model selection or robustness \rw{are not a concern} it is a good alternative to $\mband$, since it provides more flexibility over the zero pattern in the Cholesky factor.

\section{Conclusions and future work}\label{sec:conc}
In this paper we have proposed a sparse model for the Cholesky factor of the covariance matrix of a multivariate Gaussian distribution. Few other works in the literature have previously addressed this problem, and we expand them in many ways. We have formalise\rw{d} the extension of an arbitrary zero pattern in the Cholesky factorization to a Gaussian graphical model. We have proposed a novel estimation method that directly optimizes a penalized matrix loss, and we have compared it with the other already existing regression-based approaches in different simulation and real experiments. We have finally discussed which estimation method is preferable depending on the problem characteristics, and argue how our estimation proposal is preferable in many scenarios than those already existing.

As future research, the most direct and interesting derivative work would be to further analyse, both theoretically and empirically, the Gaussian graphical model extension to unordered variables of the sparse covariance Cholesky factorization model. Also in this direction, its relationship with the already established covariance graph \cite{kauermann1996,cox1993,chaudhuri2007,bien2011}, an undirected graphical model over the covariance matrix, could yield interesting results. Regarding our novel estimation method, we could explore other alternatives to solve the optimization problems that arise from the two losses that we compute in Appendix \ref{app:gradient}. For large data sets, the use of deep learning models to find the hidden Cholesky factor could be also explored.

\section*{Acknowledgements}
	This work has been partially supported by the Spanish \rw{Ministerio de Ciencia, Innovación y Universidades} through the
	PID2019-109247GB-I00 project. Irene Córdoba has been supported by the
	predoctoral grant FPU15/03797 from the Spanish \rw{Ministerio de Ciencia, Innovación y Universidades}. Gherardo Varando has been supported by a research
	grant (13358) from VILLUM FONDEN. We thank Adam P. Rothman for helpful discussion and code sharing about his work.

\appendix
\section{Appendices}

\subsection{A transformation between the Cholesky factorizations of $\mat{\Sigma}$ and $\mat{\Omega}$}\label{app:gbn:rel}
The following result gives how to obtain the elements of $\mat{L}$ from those in $\mat{U}$. Recall that, since $\mat{D}$ is diagonal, the $(i, i)$ entry in $\mat{D}^{-1}$ is $d_{ii}^{-1}$.
\begin{proposition*}
	For each $i \in
	\{1, \ldots, p\}$ with $j \in \{1, \ldots, i - 1\}$ the following
	identity holds: \[ \beta_{ij | 1, \ldots, j} = \beta_{ij | 1, \ldots, i
		- 1} + \sum_{k = j + 1}^{i - 1} \beta_{ik | 1, \ldots, i - 1} \beta_{kj
		| 1, \ldots, j}.  \] 
\end{proposition*}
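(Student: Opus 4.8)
The plan is to reduce the stated identity, which is phrased entirely in terms of regression coefficients, to a single algebraic identity among the matrix factors already introduced in Section~\ref{sec:prel}. First I would record the dictionary between coefficients and matrix entries: by the definition of $\mat{B}$ we have $b_{ik} = \beta_{ik \mid 1, \ldots, i-1}$ for $k < i$, while Proposition~\ref{prop:invreg} (equivalently Equation~\eqref{eq:l}) gives $l_{ij} = \beta_{ij \mid 1, \ldots, j}$ for $j < i$. Substituting these into the claim shows that it is equivalent to the purely algebraic statement
\[
l_{ij} = b_{ij} + \sum_{k = j+1}^{i-1} b_{ik}\, l_{kj}, \qquad j < i.
\]

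Next I would exploit the defining relation $\mat{L} = (\mat{I}_p - \mat{B})^{-1}$, or rather its rearrangement $(\mat{I}_p - \mat{B})\mat{L} = \mat{I}_p$, which I rewrite as $\mat{L} = \mat{I}_p + \mat{B}\mat{L}$. Reading off the $(i,j)$ entry for $j < i$, so that the identity matrix contributes nothing off the diagonal, yields $l_{ij} = (\mat{B}\mat{L})_{ij} = \sum_{k=1}^{p} b_{ik}\, l_{kj}$. It then remains to prune this sum using the triangular structure of the two factors: $\mat{B}$ is strictly lower triangular, so $b_{ik} = 0$ unless $k \le i-1$; and $\mat{L}$ is lower triangular with unit diagonal, so $l_{kj} = 0$ unless $k \ge j$ and $l_{jj} = 1$. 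Hence the surviving indices are $j \le k \le i-1$, and splitting off the $k = j$ term (which equals $b_{ij}\, l_{jj} = b_{ij}$) leaves exactly the displayed identity.

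I expect no genuinely hard step here; the argument is essentially bookkeeping. The only points demanding care are the faithful translation of the conditioning-set notation into entries of $\mat{B}$ and $\mat{L}$, keeping straight which of the two equivalent forms of $(\mat{I}_p-\mat{B})\mat{L}=\mat{I}_p$ to invoke, and the observation that the unit diagonal of $\mat{L}$ is precisely what isolates the $\beta_{ij \mid 1,\ldots,i-1}$ term outside the sum (with the empty-sum case $j = i-1$ collapsing correctly to $l_{i,i-1} = b_{i,i-1}$). As an aside, the same formula admits a direct probabilistic derivation through a recursion for conditional regression coefficients, the route sketched in \cite{wermuth2006}, but the matrix argument above is shorter given the identities already established.
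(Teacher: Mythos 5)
Your proof is correct and takes essentially the same route as the paper: both expand the off-diagonal $(i,j)$ entry of $(\mat{I}_p-\mat{B})\mat{L}=\mat{I}_p$ (the paper phrases this as $\mat{U}\mat{U}^{-1}=\mat{I}_p$ with $\mat{U}^t=\mat{I}_p-\mat{B}$ and $\mat{U}^{-1}=\mat{L}^t$) and use the triangular structure with unit diagonal to truncate the sum to $j\le k\le i-1$ and isolate the $k=j$ term. The only difference is cosmetic: you work directly with $\mat{B}$ and $\mat{L}$ instead of $\mat{U}$ and its inverse, which if anything reads more cleanly.
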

\begin{proof}
	First we recall that $\mat{L} = \mat{U}^{-t}$ and
	$\mat{U}\mat{U}^{-1} = {I}_p$. Thus, for each $i
	\in \{2, \ldots, p\}$ and each $j < i$, if we multiply the $i$-th
	row in
	$\mat{U}$ with the $j$-th column of $\mat{U}^{-1}$, this must be
	equal to
	$0$. Specifically, we obtain the following equation,
	\begin{align*}
	0 = &- \beta_{i\rw{j} | 1, \ldots, i - 1} \\
	&- \beta_{i\rw{j + 1} | 1, \ldots, i - 1}\beta_{\rw{j + 1}j | 1, \ldots, j} \\
	& \vdots\\
	&- \beta_{i\rw{i} - 1  | 1, \ldots, i - 1}\beta_{\rw{i}-1j | 1, \ldots, j}\\
	&+ \beta_{ij | 1, \ldots, j}.
	\end{align*}
	Therefore,
	\begin{equation*}
	0 = \beta_{ij | 1, \ldots, j} - \sum_{k = j + 1}^{i - 1}\beta_{ik | 1, \ldots, i - 1}
	\beta_{kj | 1, \ldots, j} - \beta_{ij | 1, \ldots, i - 1},
	\end{equation*}
	which yields the desired result.
\end{proof}

\subsection{Proximal gradient algorithm and gradient computations for $\phi_{NLL}$ and $\phi_{FR}$}\label{app:gradient}

We will show how to obtain a simplified expression for the gradient of $\phi(\mat{\Sigma}(\mat{T}))$ with respect to $\mat{T}$ as a function of the one with respect to $\mat{\Sigma}$. We will denote these gradients as $\nabla_{\mat{T}}\phi$ and $\nabla_{\mat{\Sigma}}\phi$, respectively.
\begin{proposition*}
	For any differentiable loss function $\phi(\mat{\Sigma}(\mat{T}))$,
	\begin{equation}\label{eq:gradient}
	\nabla_{\mat{T}}\phi = 2\nabla_{\mat{\Sigma}}\phi\mat{T}.
	\end{equation}
\end{proposition*}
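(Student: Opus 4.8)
The plan is to work with matrix differentials and the Frobenius (trace) inner product $\langle \mat{A}, \mat{B}\rangle = \tr(\mat{A}^t\mat{B})$. The defining property of the gradient in this inner product is that the differential of the scalar loss can be written in two equivalent ways, $d\phi = \langle \nabla_{\mat{\Sigma}}\phi, d\mat{\Sigma}\rangle$ and $d\phi = \langle \nabla_{\mat{T}}\phi, d\mat{T}\rangle$. The strategy is to start from the first expression, substitute the differential relation induced by the parametrization $\mat{\Sigma}(\mat{T}) = \mat{T}\mat{T}^t$, and then massage everything into the form $\langle \cdot, d\mat{T}\rangle$ so that the factor multiplying $d\mat{T}$ can be read off as $\nabla_{\mat{T}}\phi$.

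First I would differentiate the parametrization in Equation \eqref{eq:chol:cov}. The product rule gives $d\mat{\Sigma} = (d\mat{T})\mat{T}^t + \mat{T}(d\mat{T})^t$, which is automatically symmetric, as any admissible perturbation of $\mat{\Sigma}$ must be. Substituting this into $d\phi = \tr\bigl((\nabla_{\mat{\Sigma}}\phi)^t\, d\mat{\Sigma}\bigr)$ splits the differential into two trace terms, one containing $d\mat{T}$ on the right and the other containing $(d\mat{T})^t$. Writing $\mat{G} = \nabla_{\mat{\Sigma}}\phi$ for brevity, I would then reduce both terms to the common shape $\tr\bigl((\cdot)^t\, d\mat{T}\bigr)$ using only the cyclic invariance of the trace and the identity $\tr(\mat{A}) = \tr(\mat{A}^t)$; the second term, which carries $(d\mat{T})^t$, is handled by transposing its entire argument inside the trace. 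Collecting the two contributions yields $\nabla_{\mat{T}}\phi = (\mat{G} + \mat{G}^t)\mat{T}$.

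The step that requires genuine care — and the only place the statement could fail if stated carelessly — is the symmetry of $\nabla_{\mat{\Sigma}}\phi$, which is exactly what collapses the bracket $\mat{G} + \mat{G}^t$ into $2\mat{G}$. Since $\mat{\Sigma}$ is constrained to the symmetric matrices, the relevant gradient is the symmetric one, and for the two losses actually used this is transparent: differentiating Equation \eqref{eq:frob} gives $\nabla_{\mat{\Sigma}}\phi_{FR} = 2(\mat{\Sigma} - \hat{\mat{\Sigma}})$, and differentiating Equation \eqref{eq:nll} gives $\nabla_{\mat{\Sigma}}\phi_{NLL} = \mat{\Sigma}^{-1} - \mat{\Sigma}^{-1}\hat{\mat{\Sigma}}\mat{\Sigma}^{-1}$, both manifestly symmetric. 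Invoking $\mat{G} = \mat{G}^t$ then turns $(\mat{G} + \mat{G}^t)\mat{T}$ into $2\nabla_{\mat{\Sigma}}\phi\,\mat{T}$, which is precisely Equation \eqref{eq:gradient}. I expect the main obstacle to be purely notational, namely tracking the transposes correctly across the cyclic permutations in the second trace term, rather than anything conceptually deep; once the symmetry of $\nabla_{\mat{\Sigma}}\phi$ is acknowledged the identity follows in a few lines.
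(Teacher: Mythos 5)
Your proof is correct and follows essentially the same route as the paper: the paper computes $\partial\phi/\partial t_{ij}$ entrywise via $\partial(\mat{T}\mat{T}^t)/\partial t_{ij} = \mat{T}\mat{E}^{ij} + \mat{E}^{ji}\mat{T}^t$ and the same trace manipulations, which is just the coordinate version of your differential $d\mat{\Sigma} = (d\mat{T})\mat{T}^t + \mat{T}(d\mat{T})^t$ under the trace inner product. Both arguments hinge on exactly the same two facts you identify, cyclic invariance of the trace and the symmetry of $\nabla_{\mat{\Sigma}}\phi$ (which you justify more explicitly than the paper does).
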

\begin{proof}
	This proof follows some ideas from \cite[Proposition 2.1]{varando2020}. 
	By matrix calculus \cite{petersen2008matrix}, we have the following gradient expression for $j < i$,
	\begin{equation}\label{eq:gradient:brute}
	\frac{\partial \phi(\mat{\Sigma}(\mat{T}))}{\partial t_{ij}} =
	tr\left(\nabla_{\mat{\Sigma}}\phi
	\frac{\partial\mat{\Sigma}(\mat{T})}{\partial t_{ij}}\right),
	\end{equation}
	where we have used that $\nabla_{\mat{\Sigma}}\phi$ is symmetric. Furthermore, note that\cite{petersen2008matrix}
	\[
	\frac{\partial \mat{\Sigma}(\mat{T}) }{\partial t_{ij}} = \frac{\partial
		\mat{T}\mat{T}^t }{\partial t_{ij}} = \mat{T}\mat{E}^{ij} +
	\mat{E}^{ji}\mat{T}^t\rw{,}
	\]
	where \rw{$\mat{E}^{ij}$ ($\mat{E}^{ji}$) has its $(i, j)$ ($(j, i)$) entry equal to one and zero elsewhere}. Then from Equation \eqref{eq:gradient:brute} we have
	\begin{align*}
	\frac{\partial\phi(\mat{\Sigma}(\mat{T}))}{\partial t_{ij}} =&
	\tr\left(\nabla_{\mat{\Sigma}}\phi(\mat{T}\mat{E}^{ij} +
	\mat{E}^{ji}\mat{T}^t)\right) \\
	= & \tr\left(\nabla_{\mat{\Sigma}}\phi \mat{T}\mat{E}^{ij} \right)
	+ \tr\left(\nabla_{\mat{\Sigma}}\phi \mat{E}^{ji}\mat{T}^t \right) \\
	= & \tr\left(\mat{E}^{ji}\mat{T}^t\nabla_{\mat{\Sigma}}\phi\right) +
	\tr\left(\mat{E}^{ji}\mat{T}^t\nabla_{\mat{\Sigma}}\phi\right)\\
	= & 2\tr\left(\mat{E}^{ji}\mat{T}^t\nabla_{\mat{\Sigma}}\phi\right).
	\end{align*}
	Since $a_{ji} = \tr(\mat{E}^{ji}\mat{A})$ \rw{for any matrix $\mat{A}$}, we have the desired result.
\end{proof}
The above proposition implies that once a loss function $\phi(\mat{\Sigma}(\mat{T}))$ is fixed, it is only necessary to compute $\nabla_{\mat{\Sigma}}\phi$ in order to obtain $\nabla_{\mat{T}}\phi$. We would then use it for the proximal gradient method (Algorithm \ref{alg:proxgradb}).

\begin{algorithm}[h]
	\caption{Proximal gradient algorithm for minimization of $l_1$-penalized loss}\label{alg:proxgradb}
	\begin{algorithmic}[1]
		\small
		\REQUIRE  $\phi$ differentiable \rw{function over positive definite matrices}, \\
		\hspace{8pt} $\mat{T} \in \mathbb{R}^{p\times p} $ 
		lower triangular, \\  
		\hspace{8pt} $M \in \mathbb{N}$,  
		 $\varepsilon, \lambda,  \alpha \in (0,1)$ 
		\STATE{$f = \phi(\mat{\Sigma}(\mat{T})) $}
		\STATE{$g =  \lambda ||\mat{T}||_1 $}
	        \REPEAT 
		\STATE{$\mat{D} = \nabla_{\mat{T}} 
		\phi(\mat{\Sigma(\mat{T}}))$}
                \STATE{$s = 1$} 
		\LOOP
		\STATE{$\mat{T}' = \mat{T} - s  {\mat{D}}$ }
		\STATE{soft thresholding $\mat{T}'$ at level $s\lambda$} 
		\STATE{$f' = \phi(\mat{\Sigma}(\mat{T}))$, 
		$g' =  \lambda ||\mat{T}||_1 $}
		\STATE{$\nu =\frac{1}{2s}(||\mat{T} - \mat{T}'||_F^2) 
		          + \tr( (\mat{T}' - \mat{T}){\mat{D}})$}
                \IF{$f' + g' \leq f + g$ \AND   $f' \leq f + \nu $ }
		\STATE{\textbf{break}}
		\ELSE 
		\STATE{$s = \alpha s$}
		\ENDIF
		\ENDLOOP 
		\STATE{$\delta = (f + g - f' - g') $}
		\STATE{$\mat{T} = \mat{T}', f =f', g = g'$}
	%	\STATE{$I = \{ (i,j) \in [p] \times [p] : B_{ij} \neq
        %          0 \}$ \label{line:fix}}
		\UNTIL{$k > M$ \OR $\delta < \varepsilon$}
		\ENSURE   $T$ 
	\end{algorithmic}
\end{algorithm}

We thus can easily obtain the gradient for $\phi_{NLL}$ and $\phi_{FR}$, which we consider in this work. Standard matrix calculus
\cite{petersen2008matrix} gives $\nabla_{\mat{\Sigma}}\phi_{NLL} =
\mat{\Sigma}^{-1}-\mat{\Sigma}^{-1} \hat{\mat{\Sigma}} \mat{\Sigma}^{-1}$. Therefore we have
\begin{align*}
\nabla_{\mat{T}} \phi_{NLL} &= 2\nabla_{\mat{\Sigma}}
\phi_{NLL}\mat{T}\\
&= 2\mat{\Sigma}^{-1}(\mat{T})(\mat{I}_p - \hat{\mat{\Sigma}}\mat{\Sigma}^{-1}(\mat{T}))\mat{T}\\
&= 2\mat{T}^{-t}(\mat{I}_p - \mat{T}^{-1}\hat{\mat{\Sigma}}\mat{T}^{-t}).
\end{align*}
Conversely, from \cite{petersen2008matrix} we have that $\nabla_{\mat{\Sigma}} \phi_{FR}(\mat{\Sigma}) = 2(\mat{\Sigma} - \hat{\mat{\Sigma}})$. Thus $\nabla_{\mat{T}} \phi_{FR} = 2(\mat{T}\mat{T}^t - \hat{\mat{\Sigma}})\mat{T}$.

\subsection{More figures for real data experiments}\label{app:fig}

\begin{figure}[h!]
	\centering
	\includegraphics[width = \linewidth]{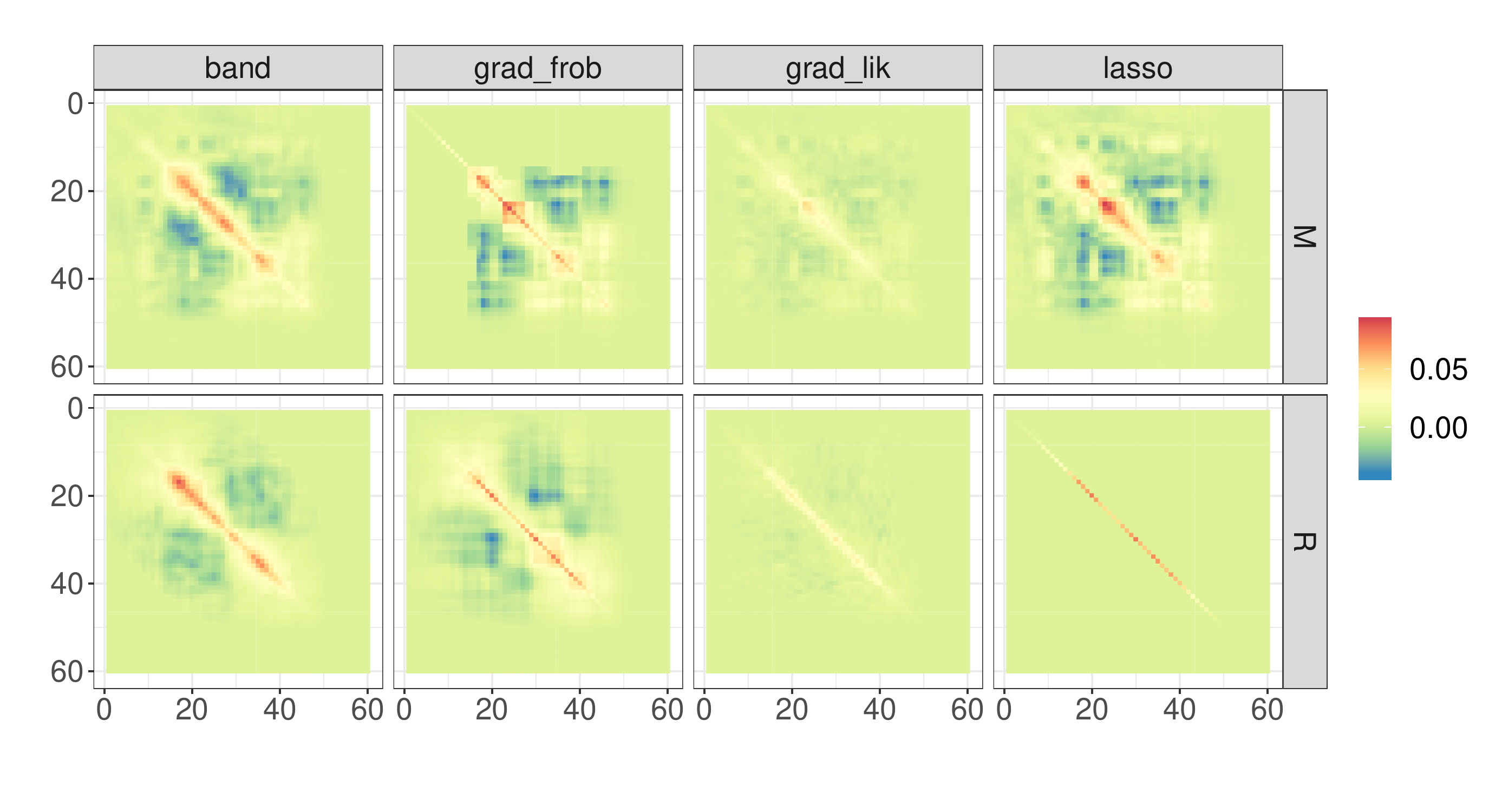}
	\caption{Heatmaps of the covariance matrix of rock and mine samples. M: Mines; R: Rocks.}
	\label{fig:sonar:covs}
\end{figure}

\begin{figure}[h!]
	\centering
	\includegraphics[width = \linewidth]{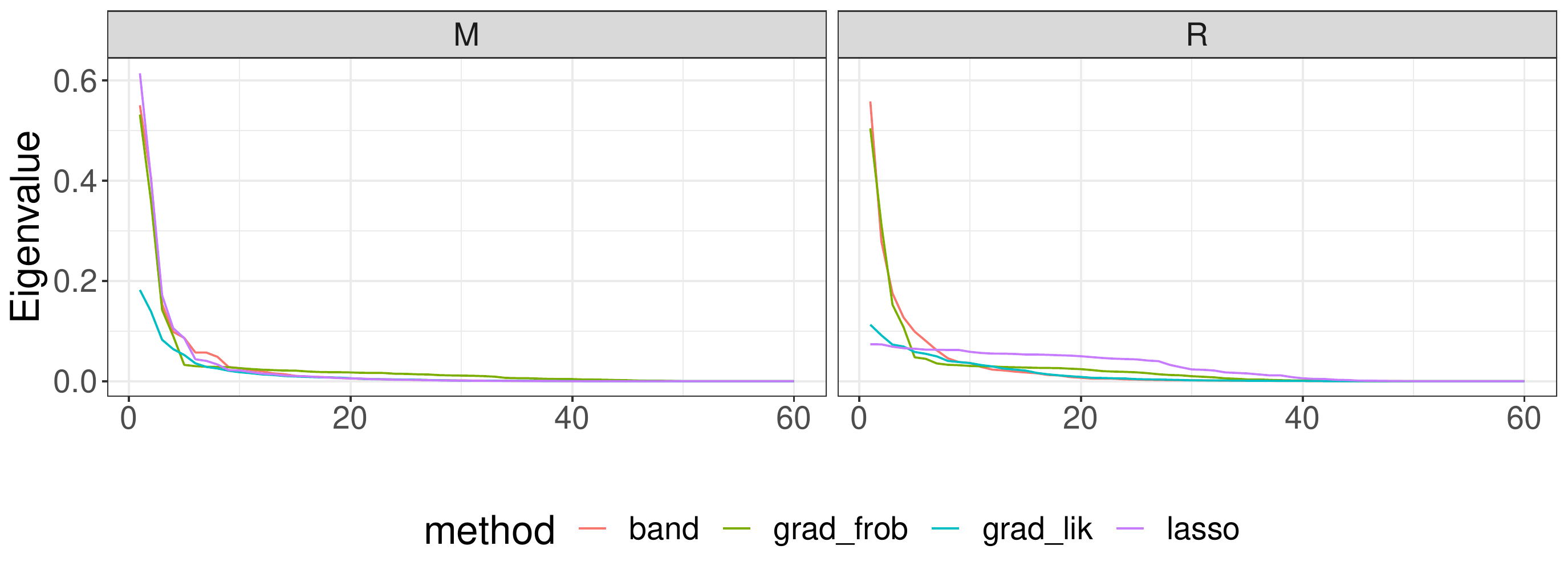}
	\caption{Scree plot of the covariance matrices of rock and mine samples. M: Mines; R: Rocks.}
	\label{fig:sonar:scree}
\end{figure}

\begin{figure}[h!]
	\centering
	\includegraphics[width = \linewidth]{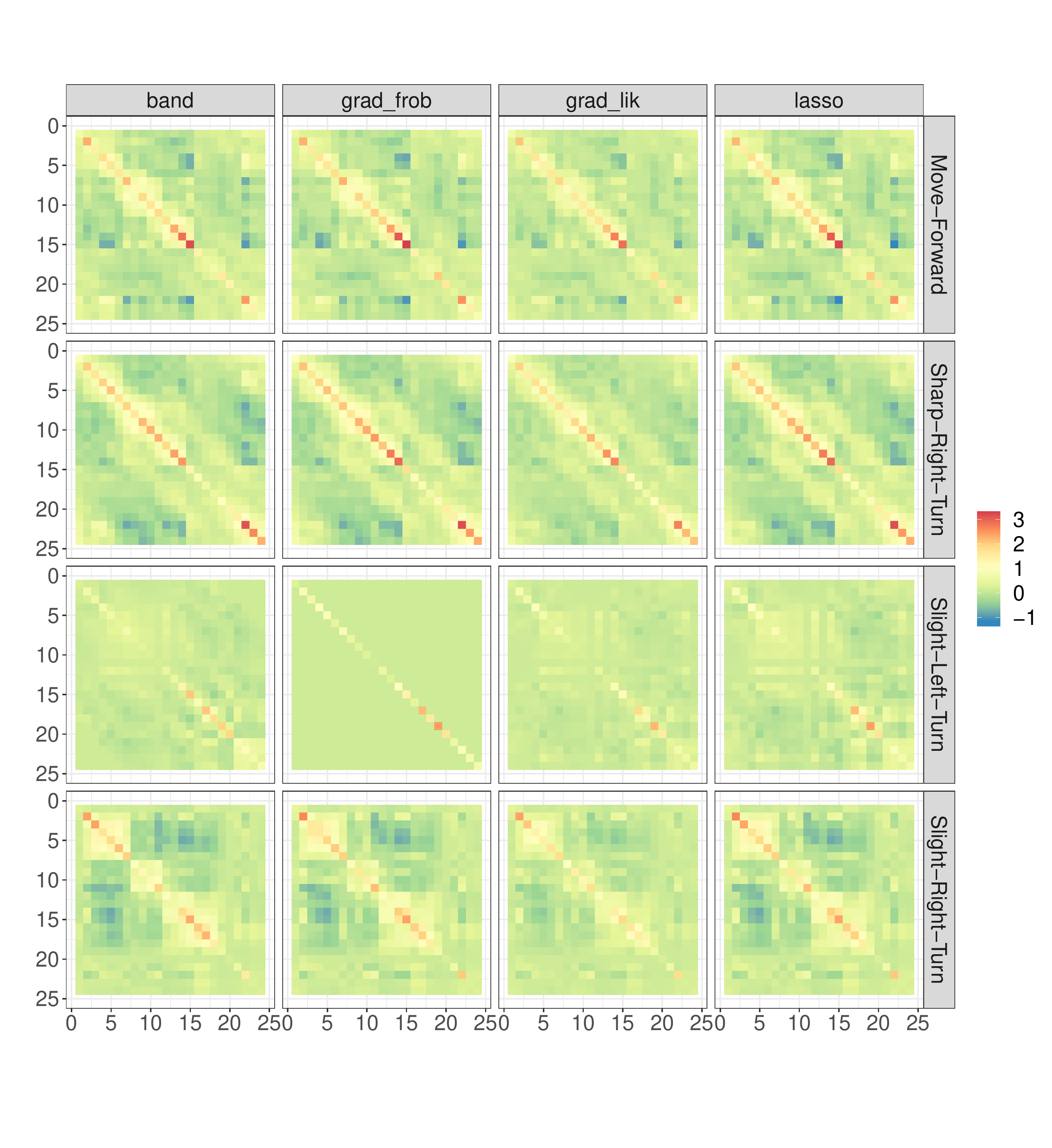}
	\caption{Heatmaps of the covariance matrix of the wall-following robot \rw{navigation} samples.}
	\label{fig:robot:covs}
\end{figure}

\begin{figure}[h!]
	\centering
	\includegraphics[width = \linewidth]{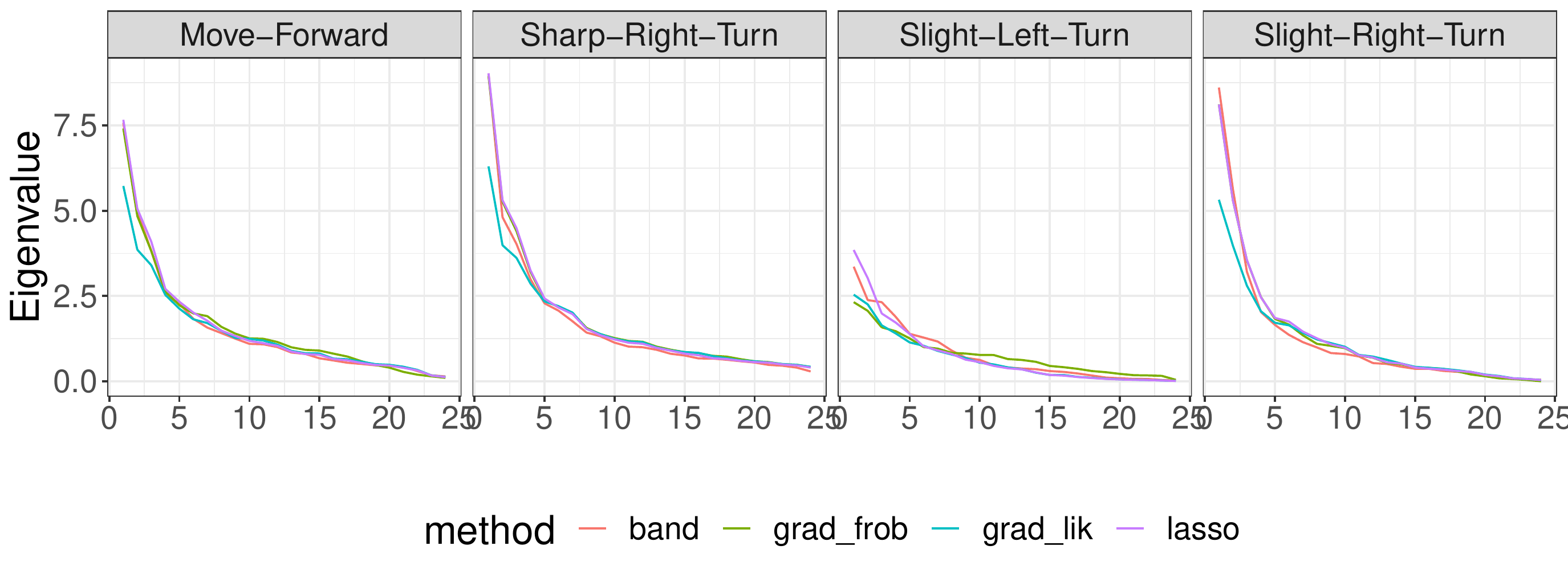}
	\caption{Scree plot of the covariance matrices of the wall-following robot \rw{navigation} samples.}
	\label{fig:robot:scree}
\end{figure}

\bibliographystyle{IEEEtran}

\bibliography{biblio}

\end{document}